\newcommand{\qedSmiley}[1]{
\hspace*{\fill}
\begin{tikzpicture}[scale=0.4]
\draw (0,0) -- (0.5,0) -- (0.5,0.5) -- (0,0.5) -- (0,0);
\draw (0.10,0.25) arc (-180:0:0.15);
\draw (0.20,0.25) -- (0.20,0.40);
\draw (0.30,0.25) -- (0.30,0.40);
\end{tikzpicture}}
\newtheorem*{proposition*}{Proposition}
\title[Is VPG Overlooked? Analyzing Deep RL for Hanabi]{Is Vanilla Policy Gradient Overlooked?\\Analyzing Deep Reinforcement Learning for Hanabi}
\author{Bram Grooten}
\affiliation{
  \institution{Eindhoven University of Technology}
  \city{Eindhoven}
  \country{Netherlands}}
\email{b.j.grooten@tue.nl}
\author{Jelle Wemmenhove}
\affiliation{
  \institution{Eindhoven University of Technology}
  \city{Eindhoven}
  \country{Netherlands}}
\email{a.j.wemmenhove@tue.nl}
\author{Maurice Poot}
\affiliation{
  \institution{Eindhoven University of Technology}
  \city{Eindhoven}
  \country{Netherlands}}
\email{m.m.poot@tue.nl}
\author{Jim Portegies}
\affiliation{
  \institution{Eindhoven University of Technology}
  \city{Eindhoven}
  \country{Netherlands}}
\email{j.w.portegies@tue.nl}
\begin{abstract}
In pursuit of enhanced multi-agent collaboration, we analyze several on-policy deep reinforcement learning algorithms in the recently published Hanabi benchmark. Our research suggests a perhaps counter-intuitive finding, where Proximal Policy Optimization (PPO) is outperformed by Vanilla Policy Gradient over multiple random seeds in a simplified environment of the multi-agent cooperative card game. In our analysis of this behavior we look into Hanabi-specific metrics and hypothesize a reason for PPO's plateau. In addition, we provide proofs for the maximum length of a perfect game (71 turns) and any game (89 turns).
Our code can be found at: \url{https://github.com/bramgrooten/DeepRL-for-Hanabi}.
\end{abstract}
\keywords{Deep reinforcement learning, Hanabi, Vanilla Policy Gradient, PPO, multi-agent collaboration}
\newcommand{\BibTeX}{\rm B\kern-.05em{\sc i\kern-.025em b}\kern-.08em\TeX}
\begin{document}


\pagestyle{fancy}
\fancyhead{}


\maketitle 


\section{Introduction}

Many real world scenarios such as autonomous driving require multi-agent collaboration through partial observability. A new benchmark was recently developed by 
a group of researchers from DeepMind, who coined the Hanabi Challenge as a new frontier for AI \cite{bard}. 
Reinforcement learning approaches that have been applied to this benchmark so far include asynchronous advantage actor-critic (A3C) algorithms \cite{bard}, deep Q-networks (DQNs) \cite{sad}, and search methods \cite{search}.
%
We missed the application of standard on-policy algorithms such as Vanilla Policy Gradient (VPG) and Proximal Policy Optimization (PPO), so we were motivated to discover whether these methods perform well in this new environment. 
We run experiments to compare the algorithms, and analyze the behavior of the agents.
Our main contributions are:
\begin{enumerate}[I.]
    \item We define a simplified version of Hanabi and apply three deep reinforcement learning algorithms to it, with VPG being the unexpected winner.
    \item We analyze the agents' performance through metrics corresponding specifically to Hanabi, and hypothesize why PPO seems to hit a plateau.
    \item We provide proofs for the maximum length of a regular and a perfect Hanabi game, being 89 and 71 turns respectively. The latter number contradicts earlier literature.
\end{enumerate}
We will first explain the rules of Hanabi, after which we go into related work in Section~\ref{sec:related-work}. The setup and results of our experiments are shown in Section~\ref{sec:experiments}. Section~\ref{sec:analysis} analyzes the outcomes while diving into Hanabi-specific properties, such as the game length. Lastly, Section~\ref{sec:conclusion} concludes the paper.


\begin{figure}
    \centering
    \includegraphics[width=\linewidth]{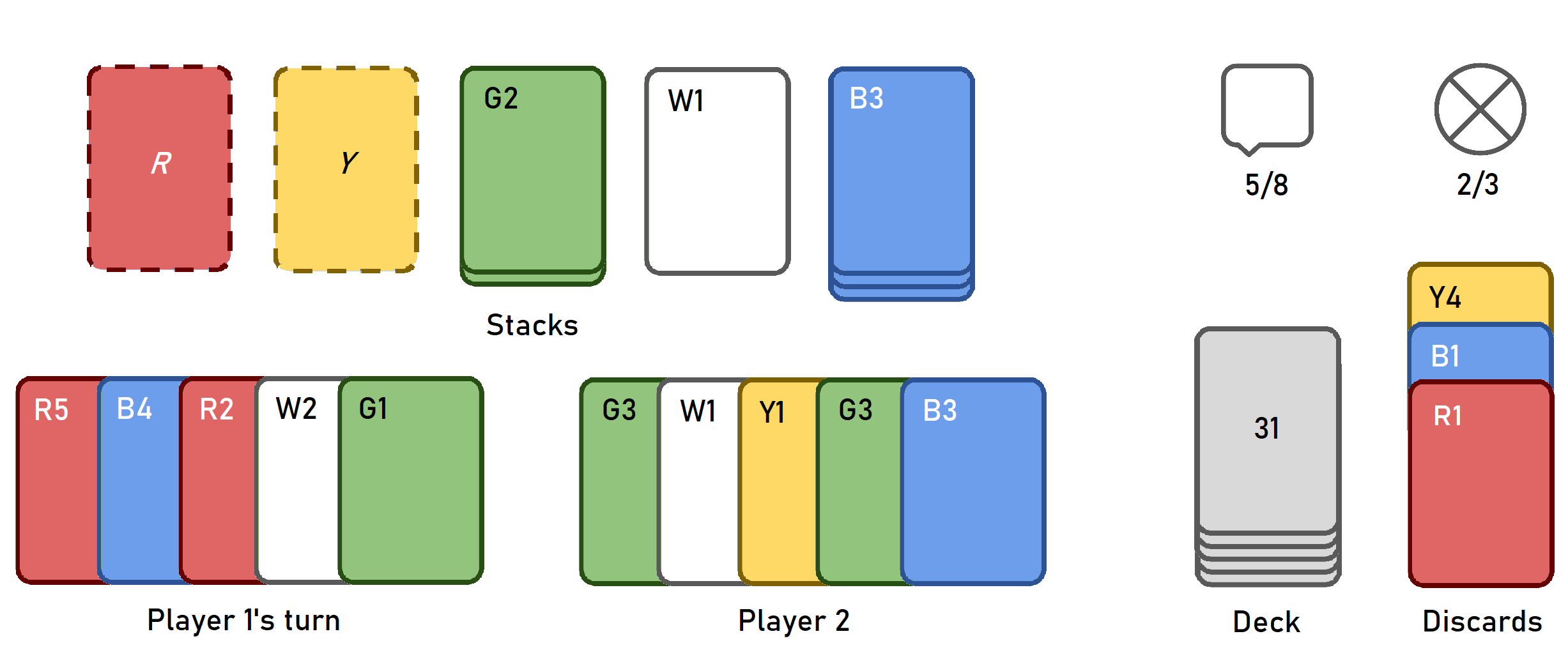}
    \caption{Example of a game state. Image adjusted from \cite{bard}.}
    \label{fig:state}
\end{figure}

\subsection*{Hanabi}

We briefly explain the rules of Hanabi. The card game can be played with 2 to 5 players who collaborate to achieve the highest score possible. The twist is that you cannot see your own cards, but you do see all the other player's cards. By giving each other (restricted) hints players can deal with this imperfect information.

The goal of the game is to form stacks of cards, one for each of the five colors, see Figure~\ref{fig:state}. Every card has a color and a rank between 1 and 5. A stack must begin with a rank 1 card, and build all the way up to 5. If all stacks (also called \textit{fireworks}\footnote{\textit{Hanabi} is actually Japanese for \textit{fireworks}.}) have been completed, the perfect score of 25 has been reached.

Players start with 5 cards in their hand (or 4 when playing with four or five players). 
During her turn, a player may do one of three things: give a hint to another player, play a card, or discard a card. Every time a card is played or discarded, the player gets a new card from the deck.

To give a hint, a player must choose one rank or color and point at all the cards with this property in an other player's hand. This can be done as long as there are hint tokens available, the game starts with just 8. Fortunately, if a player discards a card the group retrieves one hint token.\footnote{Except if there are already 8 hint tokens available, then discarding is not allowed.} Half of the 50 cards in total are duplicates\footnote{To be exact: there are three duplicates of rank 1 cards, two duplicates of cards with ranks 2, 3, or 4, and rank 5 cards are unique.}, so discarding may happen a lot. One hint token is also retrieved when the stack of a certain color is completed.

When a player is confident enough that one of her cards will fit on top of a stack, she can play it. If placed successfully the score goes up by one, otherwise the card will be moved to the discard pile and the group loses one life token. If all 3 life tokens are lost, the game ends and the score goes down to 0. The game also ends if the perfect score has been reached, or when the deck is empty. In the latter case each player gets one more turn, including the one who emptied the deck.

\section{Related Work}
\label{sec:related-work}


The challenge paper by Bard et al. \cite{bard} served as a starting point for our research. It provides the Hanabi Learning Environment\footnote{See \url{https://github.com/deepmind/hanabi-learning-environment}.} which we build upon in our implementations. Furthermore, they defined two separate research domains called
\textit{self-play} and \textit{ad-hoc}. In self-play an agent only plays with copies of itself, while in ad-hoc agents must be able to play with a wide range of other agents or even human players. 
Most of the current literature focuses on self-play, with a couple of exceptions \cite{canaan2019, eger2017}. Our research also stays in the self-play domain.

Another important distinction is the approach used to program an agent for Hanabi. We separate them into the categories:
with or without machine learning. We call the agents that do not use any learning method \textit{rule-based}, and it turns out that they are still outperforming the learning agents in many cases. In our previous work \cite{thesisBram} we presented an overview of the state-of-the-art of both approaches, which we will briefly summarize and update here.

\subsection{Rule-based agents}

Within the rule-based regime there again exist two categories: bots that are based on human Hanabi conventions \cite{conventions}, and bots that use \textit{hat-guessing} strategies \cite{cox-hat}. Both approaches can achieve quite decent scores in self-play, but not in ad-hoc play. 

The hat-guessing method is based on a mathematical game where players have to guess the color of their own hat. In Hanabi players do not know the color of their own cards, so this called for similar strategies. By using modular arithmetic, a lot of information can be given with a single hint, provided that all players follow the same algorithm. 
The state-of-the-art in self-play (for 3 or more players) is held by a bot that uses this hat-guessing strategy, called WTFWThat \cite{wtfwthat}. Its scores have been improved later on by the use of search methods \cite{search}.
Some of the best bots that use human conventions include SmartBot \cite{smartbot} and FireFlower \cite{fireflower}. 



\subsection{Learning agents}
\label{sec:learning-agents}

In their challenge paper, Bard et al. \cite{bard} apply two existing approaches of deep reinforcement learning to their Hanabi Learning Environment. The Rainbow agent \cite{rainbow} scores an average of about 18.2 out of 25 in self-play,\footnote{Average taken over all possible number of players (2, 3, 4, and 5).} while the Actor-Critic-Hanabi-Agent (ACHA) which Bard et al. based on A3C \cite{acha}, performed better: 20.3 on average. In the ad-hoc domain both agents have scores close to zero.

In 2018 the Bayesian Action Decoder (BAD) \cite{bad} set a record for 2-player games of Hanabi. The next year, Hu \& Foerster improved the bot with the Simplified Action Decoder (SAD) \cite{sad}, which drastically increased the scores among learned policies in self-play for any number of players. The state-of-the-art for 3 to 5 players is still held by the rule-based bot WTFWThat \cite{wtfwthat}, but reinforcement learning is ahead in the 2-player domain, see \autoref{tab:sota}.

The SAD agent provided a simple, yet elegant solution to the problem of updating beliefs during the exploration phase. In this phase many random actions are taken, which can give misleading information about the state of the game to other agents. Thus, only during training, the agents were allowed to communicate their preferred action, while performing a different random action. This simplified the Bayesian reasoning process.

The scores of SAD were further improved through the tabular search method SPARTA \cite{search}. The agents start off with a blueprint policy, which can be any strategy, also a learned one. In every step of the game, the agents perform a search for the best action using many Monte Carlo rollouts. This action can deviate from the blueprint policy. To make sure that the other agents do not misinterpret the action taken, all agents redo the search of every other agent themselves, using the same random seed (which is shared before the game starts). Agents now know whether an action came from the blueprint policy or from search. This improved the state-of-the-art in self-play for every number of players.
Just last year, the same research group increased the 2-player score slightly further by a more efficient search method called RL Search \cite{rlsearch}.

The popular on-policy deep reinforcement learning algorithm PPO had not been applied to Hanabi yet until last year, when Yu et al. \cite{mappo} adjusted the method to MAPPO (Multi-Agent PPO) to make it more applicable to cooperative games. In the 2-player self-play domain their scores are comparable to, but slightly lower than the state-of-the-art. We use the standard, single-agent version of PPO in this research.

\begin{table}[]
\centering
\caption{The state-of-the-art Hanabi agent in self-play for each number of players, to the best of our knowledge. The names in parentheses indicate that these agents have been improved by search methods (RL Search \cite{rlsearch}, SPARTA \cite{search}), which increased their original scores. The table includes average scores $\pm$ standard error of the mean, and the percentage of perfect games. Data is taken from \cite{rlsearch, search}.}
\label{tab:sota}
\begin{tabular}{ccc}
\# Players & Agent   & Score \\ \hline \rule{0pt}{2.3ex}%
2         & Q-learning(+RL Search)      & \begin{tabular}[c]{@{}c@{}}24.62 $\pm$ 0.03\\ 75.9\%\end{tabular}  \\
3         & WTFWThat(+SPARTA) & \begin{tabular}[c]{@{}c@{}}24.83 $\pm$ 0.006\\ 85.9\%\end{tabular} \\
4         & WTFWThat(+SPARTA) & \begin{tabular}[c]{@{}c@{}}24.96 $\pm$ 0.003\\ 96.4\%\end{tabular} \\
5         & WTFWThat(+SPARTA) & \begin{tabular}[c]{@{}c@{}}24.94 $\pm$ 0.004\\ 95.5\%\end{tabular}
\end{tabular}
\vspace{-10pt}
\end{table}


\section{Experiments}
\label{sec:experiments}

Reading through the literature on Hanabi, we missed the application of the 
standard, or `vanilla', policy gradient algorithm. We were motivated to discover how well it would perform on the new Hanabi benchmark against PPO \cite{ppo}, so we setup a few experiments.

\subsection{Setup}

We compare PPO\footnote{Specifically, PPO-Clip.} with the actor-critic algorithm Vanilla Policy Gradient (VPG), as well as an even simpler algorithm that only has a policy network (actor) and no value network (critic), which we call Simple Policy Gradient (SPG). Both SPG and VPG are based on the classic REINFORCE algorithm \cite{williams92}. Our implementations are built upon the SpinningUp documentation by OpenAI \cite{SpinningUp2018}. 

We use the simplified or `cheat' version of Hanabi, which means that players \textit{are} now allowed to view their own cards. This greatly reduces the complexity of the game, although it has been proven that the problem of finding a winning play sequence is still NP-complete in this case \cite{NP-complete}. If we consider the closed deck of cards to be part of the transition function of a Markov decision process (MDP) instead of being part of the state, then the game has now become fully observable. This can be done by viewing the shuffled deck as a uniformly random distribution over all cards that are left. We have reduced the decentralized partially observable MDP (Dec-POMDP) of Hanabi to a multi-agent or decentralized MDP (MMDP or Dec-MDP). See Appendix~\ref{sec:mdps} for an overview of the different mathematical frameworks.

For us it means that searching through the action space becomes much more manageable, as our policy networks only need 11 output neurons. We stick to the two-player version of Hanabi, so each player has 5 cards it can play or discard, giving 10 actions. We include one more action neuron which produces a random hint when selected. Sharing information in simplified Hanabi is superfluous, but the action is still necessary to lower the hint token budget such that discarding is allowed.\footnote{The random hint action can also be used to `pass' the turn to the other player.}

We will now go into some implementation details. 
Our policy network and value network both receive the state of the game as input, which is encoded into a binary vector of length 136 in the following way. 
First, the firework stacks are represented in thermometer style, with five binary numbers for each color. For example, $[1,1,1,0,0]$ means that the firework of a certain color is at rank 3. 
For each of the player's own cards we include a one-hot encoding for the color as well as the rank. The Y4 card for instance is represented by the piece $[0,1,0,0,0,\ \  0,0,0,1,0]$.
The discard pile is included with 10 binary values per color, then grouped by rank. Thus, $[1,1,0,\ \ 0,0,\ \ 1,0,\ \ 0,0,\ \ 0]$ means that two rank 1 cards and one rank 3 card of a certain color have been discarded. 
Lastly, the vector pieces $[1,1,0]$ and $[1,1,1,1,1,0,0,0]$ indicate that there are 2 life tokens and 5 hint tokens left.
The total length of this state encoding becomes $5\cdot5 + 5\cdot10 + 5\cdot10 + 3 + 8 = 136$.

All networks have 3 hidden layers of differing sizes, with $\textrm{Tanh}$ activation functions in between. 
The policy network outputs a probability distribution over the 11 possible actions through a softmax activation. The value network (VPG and PPO only) has an output layer with a single neuron and no activation, to be able to estimate the true state value function of the current policy: $v^{\pi_{\bm{\theta}}}(s)$.
We use the Adam optimizer with a learning rate of $3 \cdot 10^{-4}$ for both networks. 
The loss function for the value network is mean squared error, while the objective function for the policy network depends on the algorithm. For SPG we use:
\begin{equation*}
    \mathbb{E}_{\pi} \left[ \sum_{t=0}^{\infty} \nabla_{\bm{\theta}} \log \pi_{\bm{\theta}}(A_t\,|\,S_t) \cdot q^{\pi_{\bm{\theta}}}(S_t,A_t) \right]
\end{equation*}
as the policy gradient\footnote{We are aware that Nota and Thomas \cite{nota2020policy} have proven this expression to be incorrect, as it should include a discount factor term: $\gamma^t$. The expression has however been used successfully in practice, so we stick with it.}. Here $\pi_{\bm{\theta}}(a\ |\ s)$ denotes the probability of selecting action $a$ in state $s$ with our current policy $\pi$ parameterized by $\bm{\theta}$. Capital letters stand for random variables. Lastly, $q^{\pi_{\bm{\theta}}}(s,a)$ is the true state-action value function of the current policy, which the algorithm estimates by running about 10 episodes of Hanabi. 
For VPG we have:
\begin{equation*}
   \mathbb{E}_{\pi} \left[ \sum_{t=0}^{\infty} \nabla_{\bm{\theta}} \log \pi_{\bm{\theta}}(A_t\,|\,S_t) \cdot A_t^{\pi_{\bm{\theta}}} \right] 
\end{equation*}
where $A_t^{\pi_{\bm{\theta}}}$, the advantage function, is defined as 
\begin{equation*}
    A_t^{\pi_{\bm{\theta}}} = A^{\pi_{\bm{\theta}}}(S_t, A_t) = q^{\pi_{\bm{\theta}}}(S_t,A_t) - v^{\pi_{\bm{\theta}}}(S_t).
\end{equation*}
We use generalized advantage estimation (GAE) \cite{gae} to approximate this quantity.
For PPO a totally different expression is maximized:
\begin{equation*}
    \mathbb{E}_{\pi}\bigg[\min \left(
    r_t(\bm{\theta}) A_t^{\pi_{\bm{\theta}_{\text{old}}}}\, \mathlarger{\mathlarger{\bm{,}}} \ 
    \textsc{clip}\left(r_t(\bm{\theta}), 1-\varepsilon, 1+\varepsilon \right) A_t^{\pi_{\bm{\theta}_{\text{old}}}}
    \right)\bigg]
\end{equation*}
where  
\begin{align*}
    r_t(\bm{\theta}) = \frac{\pi_{\bm{\theta}}(A_t\,|\,S_t)}{\pi_{\bm{\theta}_{\text{old}}}(A_t\,|\,S_t)}
    & \text{\ \ \ \ and} & 
    \textsc{clip}(x, a, b) = 
    \begin{cases}
    b & \text{if } x > b \\
    x & \text{if } a \leq x \leq b \\
    a & \text{if } x < a
    \end{cases}. 
\end{align*}
We update the policy network five times per epoch in PPO, such that the clipping operation has effect.\footnote{In the first update iteration of each epoch, we have $\pi_{\bm{\theta}} = \pi_{\bm{\theta}_{\text{old}}}$ so no clipping occurs.}
Each epoch collects a batch of about 1000 environment steps. However, we let the last episode of a batch finish so every epoch has slightly more than 1000 environment steps.
To make sure that our policies play at least 10 episodes per epoch we determined the maximum length of a Hanabi game, shown in Section~\ref{sec:game-length}.

We performed a small hyperparameter search by trying out different network sizes, state encodings, and reward shaping systems. The final settings that we used in our experiments can be found in Table~\ref{tab:hyperparams} of Appendix~\ref{sec:hyperparams}.


\subsection{Results}

\begin{figure}
    \centering
    \includegraphics[width=\linewidth]{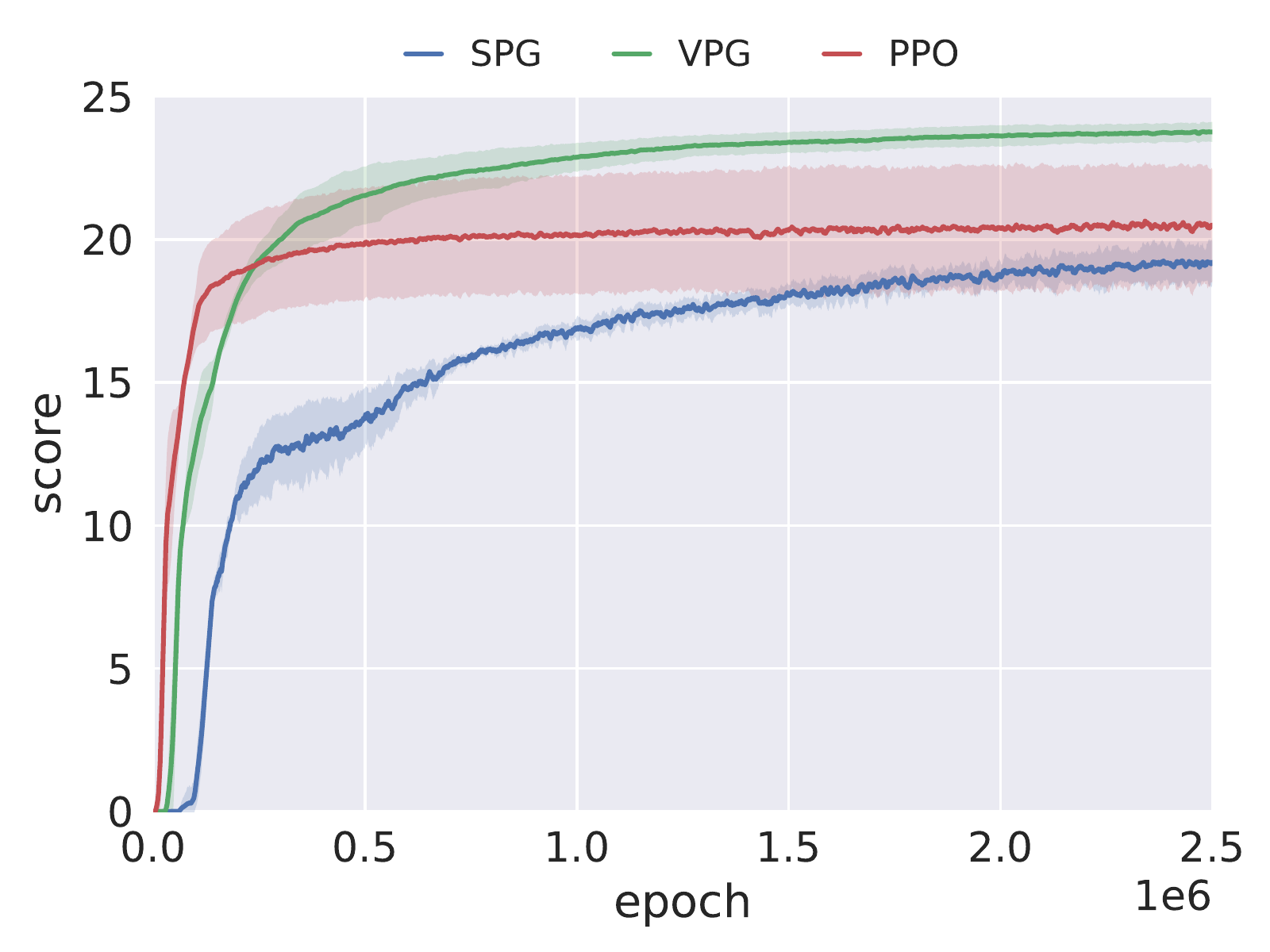}
    \caption{Complete learning curves of the algorithms. The curves show the average of 5 distinct random seeds, with the standard deviation faded above and below. A plot with a separate line for each seed is shown in Figure~\ref{fig:separate-plot} of Appendix~\ref{sec:extra-figs}.}
    \label{fig:comparing}
\end{figure}

To our surprise we notice that PPO is not able to beat the performance of VPG, as shown in Figure~\ref{fig:comparing}. The five runs of PPO all hit a plateau at different levels around an average score just above 20, instead of increasing towards the perfect score of 25. VPG continues to increase slowly and reaches an average score of 23.72 after 2.5 million epochs. Even SPG nears the performance of PPO eventually, albeit at a much slower learning pace.

One of the advantages of PPO in our experiments is that it learns much quicker in the beginning. In Figure~\ref{fig:250K} we again show the learning curves, but only until 250,000 epochs. It takes VPG and SPG quite a lot longer to learn how to increase the score above 0. We noticed that in Hanabi this means an agent needs to learn how to retain at least one life token. In Section~\ref{sec:analysis} we analyze this behavior further.

\begin{figure}
    \centering
    \includegraphics[width=\linewidth]{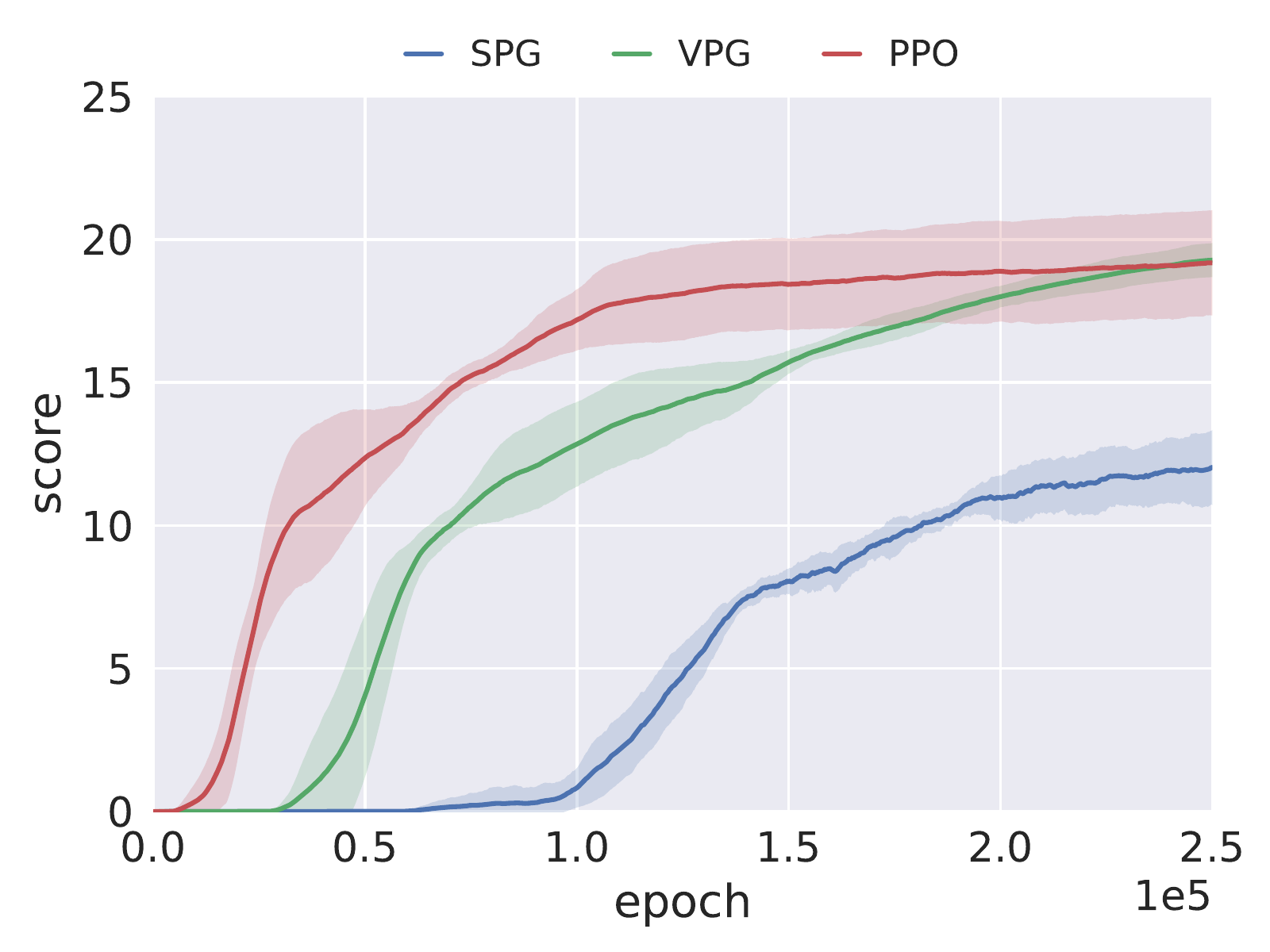}
    \caption{Comparing the scores after just 10\% of training. PPO is the quickest to learn how to increase the score above 0, but is surpassed by VPG later on.}
    \label{fig:250K}
\end{figure}

We tested the final algorithms for 1000 episodes per random seed. The results of these games are shown in Figure~\ref{fig:histo} and Table~\ref{tab:tests}. VPG scores 44.5\% perfect games, while PPO reaches only 13.5\%. Notice that all algorithms still have some failed games of zero points, although VPG almost eliminated them.
In simplified Hanabi it should be much easier to reach a perfect score. Note however, such a winning play sequence does not always exist\footnote{Imagine the situation where all the rank 1 cards are on the bottom of the deck.} \cite{bergh} so we cannot expect a 100\% perfect game proportion.

\begin{figure}
    \centering
    \includegraphics[width=\linewidth]{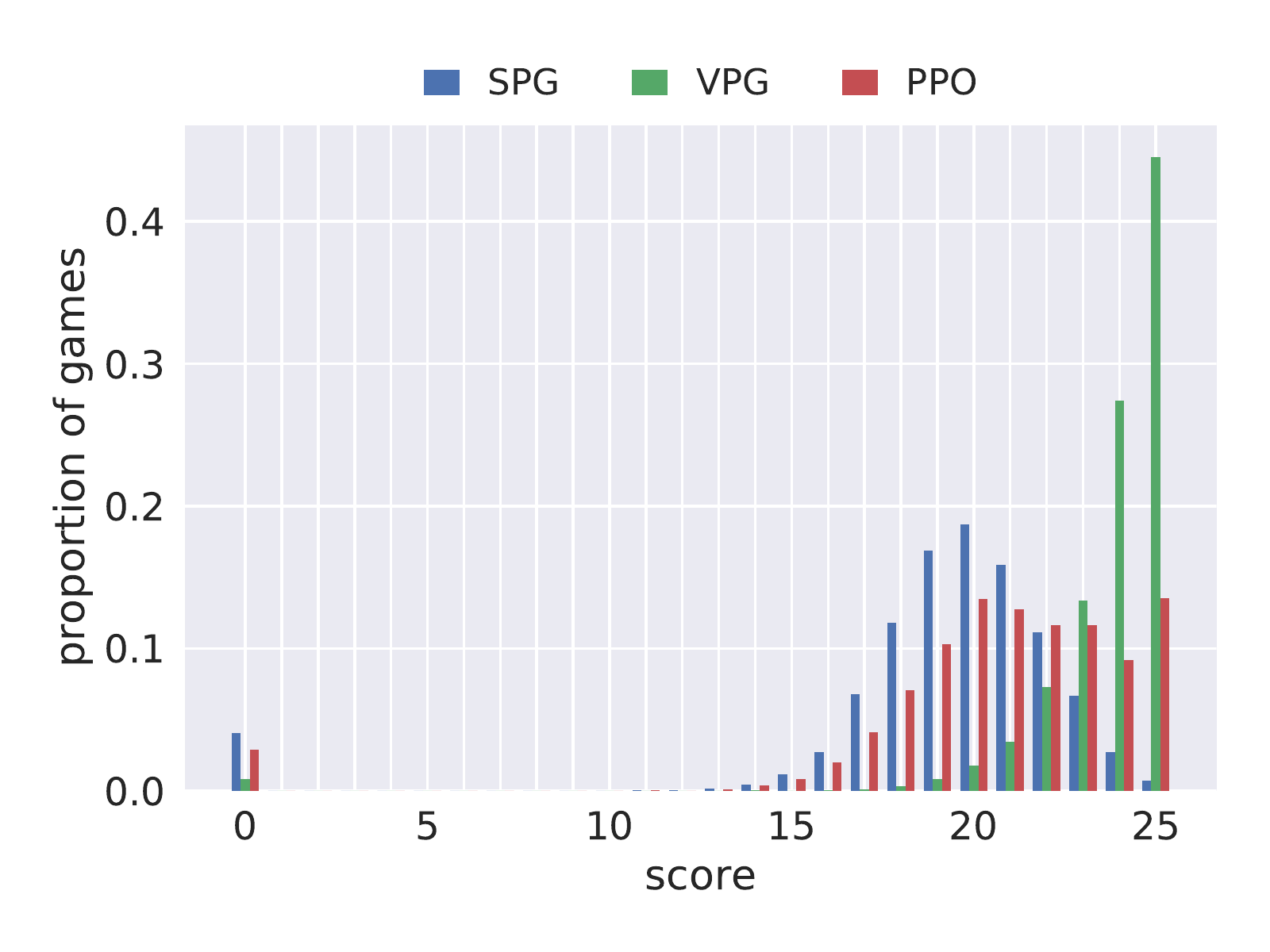}
    \caption{Testing 5000 games per algorithm (1000 for each random seed) after 2.5 million epochs of training.}
    \label{fig:histo}
\end{figure}

\begin{table}[]
\centering
\caption{Performance metrics of 5000 test games after 2.5 million epochs of training. The table includes average scores $\pm$ standard error of the mean, and the percentage of perfect games. The environment is 2-player simplified Hanabi in the self-play domain.}
\label{tab:tests}
\begin{tabular}{ccc}  
                 SPG   & VPG     & PPO \\ \hline \rule{0pt}{2.3ex}%
\begin{tabular}[c]{@{}c@{}}19.09 $\pm$ 0.06\\ 0.7\%\end{tabular}
&
\begin{tabular}[c]{@{}c@{}}23.72 $\pm$ 0.04\\ 44.5\%\end{tabular}
&
\begin{tabular}[c]{@{}c@{}}20.66 $\pm$ 0.06\\ 13.5\%\end{tabular} 
\end{tabular}
\end{table}


\section{Analysis}
\label{sec:analysis}

In this Section we inspect the performance of our algorithms by looking into a few interesting metrics that we recorded during training, 
such that we can 
hypothesize why PPO scored worse than VPG. We also analyze Hanabi specifically by providing proofs of the maximum length of a regular and a perfect game.

\subsection{Performance analysis}

The following metrics give a better impression of how our algorithms are learning. Let us discuss them one by one.

\textbf{Life tokens.} We keep track of how many life tokens were left over at the end of an episode during training. As we know from the rules of Hanabi, the score decreases back to 0 once all three life tokens are lost.\footnote{There is a variant of Hanabi where the score remains the same, but we do not use it.}
As we see in Figure~\ref{fig:250K}, the algorithms need some time to learn that at least one life token should be left over to maintain its score. In Figure~\ref{fig:life} we see that the number of life tokens shoots up at the same time as the scores go up. Later on, the networks learn that it is not necessary to retain many life tokens, just one is enough. See Appendix~\ref{sec:extra-figs} for figures in this section where the epoch axis goes until the end of training.

\begin{figure}
    \centering
    \includegraphics[width=\linewidth]{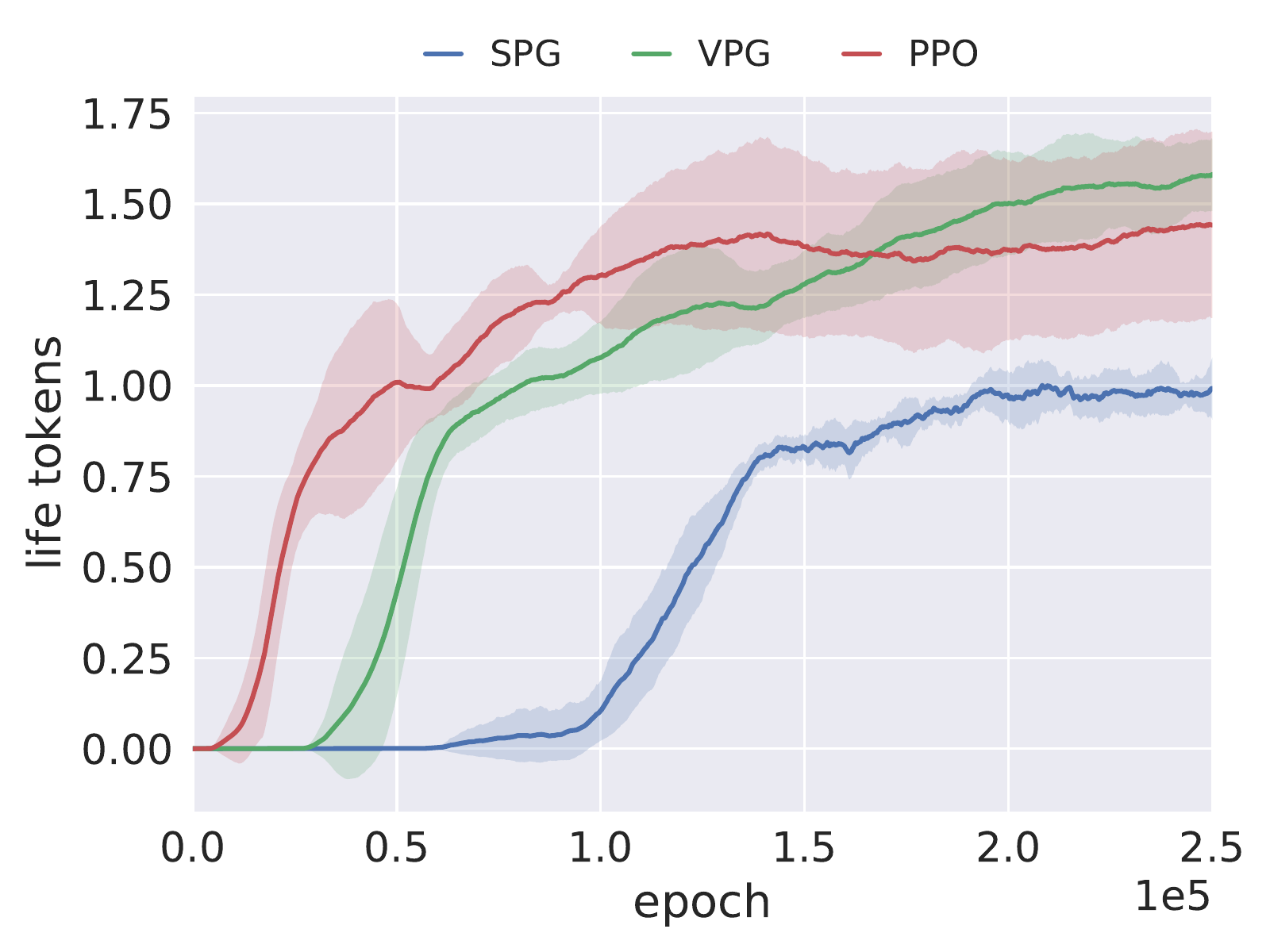}
    \caption{Average number of life tokens left at the end of an episode, shown for the first 10\% of training. The algorithms quickly discover that at least one is needed to get a positive score.}
    \label{fig:life}
\end{figure}

\textbf{Fireworks.} To enable us to see whether an algorithm is actually making progress in this first phase, where all life tokens are constantly lost, we have to look at a different metric than the score. We define the fireworks as the total number of successfully played cards at the end of an episode. If the agents retain at least one life token, then this value equals the score. But when they do not, we can still view their progress with the fireworks metric.

\begin{figure}
    \centering
    \includegraphics[width=\linewidth]{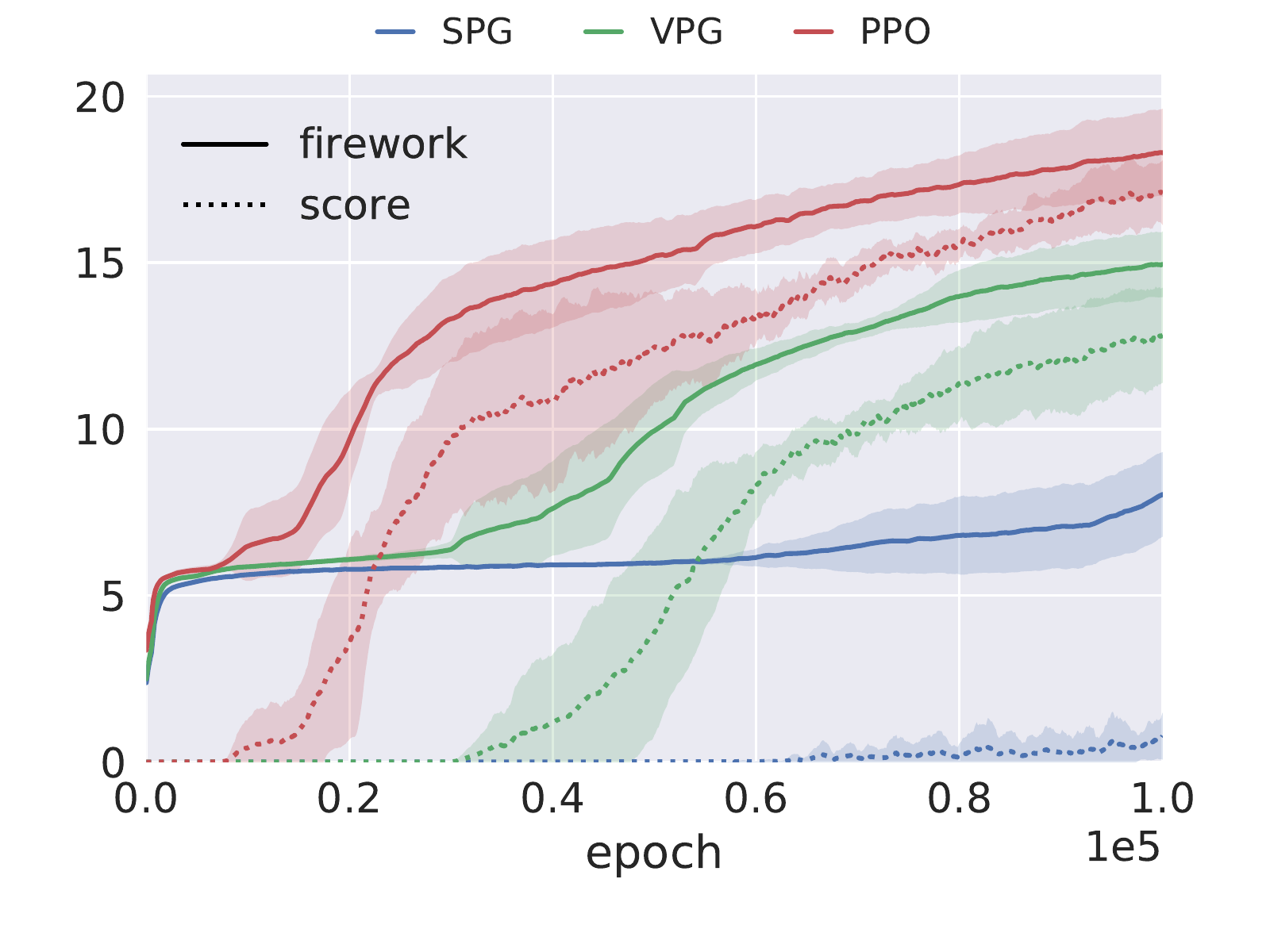}
    \caption{Development of fireworks and scores during training. We stop the graph after just 100,000 epochs for clarity.}
    \label{fig:fireworks}
\end{figure}

In Figure~\ref{fig:fireworks} we show that our agents are actually learning to play cards successfully before they start to retain some life tokens. A fireworks value just above 5 is quickly reached by all three algorithms. We think this is because five rank 1 cards can be played immediately, as long as they have five distinct colors.
Playing higher ranked cards is more difficult. You must meet the extra restriction that a card with the prior rank should be on the stack already.

For each algorithm we see that the moment when the fireworks start to increase far above 5 is simultaneous with the moment that scores go above 0 (life tokens are retained). It seems that in Hanabi learning how to play cards with a rank higher than 1 is the same skill as learning how to retain life tokens, which corresponds to our intuition. 

\textbf{Action probabilities.} To view the development of the action selection probabilities of each agent throughout their training process, we keep track of the average output of the policy networks. In Figure~\ref{fig:actions} we combined the 5 play actions into one category, and did the same with the 5 discard actions. In the very beginning the agents play a lot of (bad) cards, losing all of their life tokens, while after 50,000 epochs the probabilities have completely switched. The agent becomes `scared' to play a card, wanting to retain life tokens. Later on the probability of playing starts to increase again, eventually becoming the preferred action, see Figure~\ref{fig:actions-vpg-max} in Appendix~\ref{sec:extra-figs}.
Increasing the probability of playing cards is something we encouraged with reward shaping. See Appendix~\ref{sec:hyperparams} for our specific rewards.

\begin{figure}
    \centering
    \includegraphics[width=\linewidth]{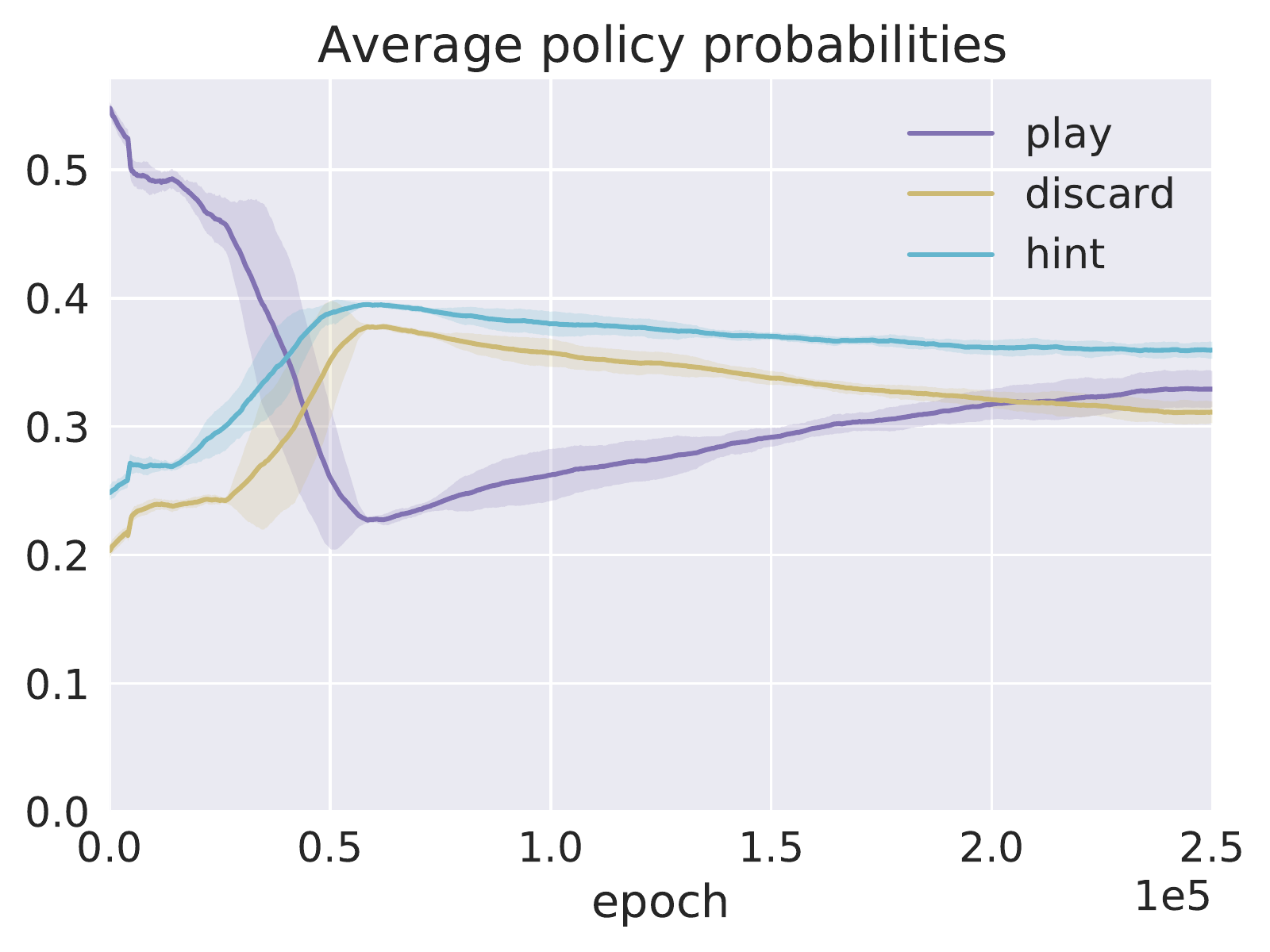}
    \caption{Average policy of our five VPG agents during the first 10\% of training. Similar graphs for SPG and PPO are shown in Appendix~\ref{sec:extra-figs}.}
    \label{fig:actions}
\end{figure}

\textbf{Positional bias.} We want to see whether all card positions in an agents hand are used equally often. For this we plot a histogram representing the policy of one of our VPG\footnote{Histograms for SPG and PPO are in Appendix~\ref{sec:extra-figs}.} agents in Figure~\ref{fig:probs-histo}. It is visible that this agent has a substantial bias towards playing from card position 4, which is the newest card. For simplified Hanabi this makes sense: if you receive a playable card, why not play it immediately?

\begin{figure}
    \centering
    \includegraphics[width=\linewidth]{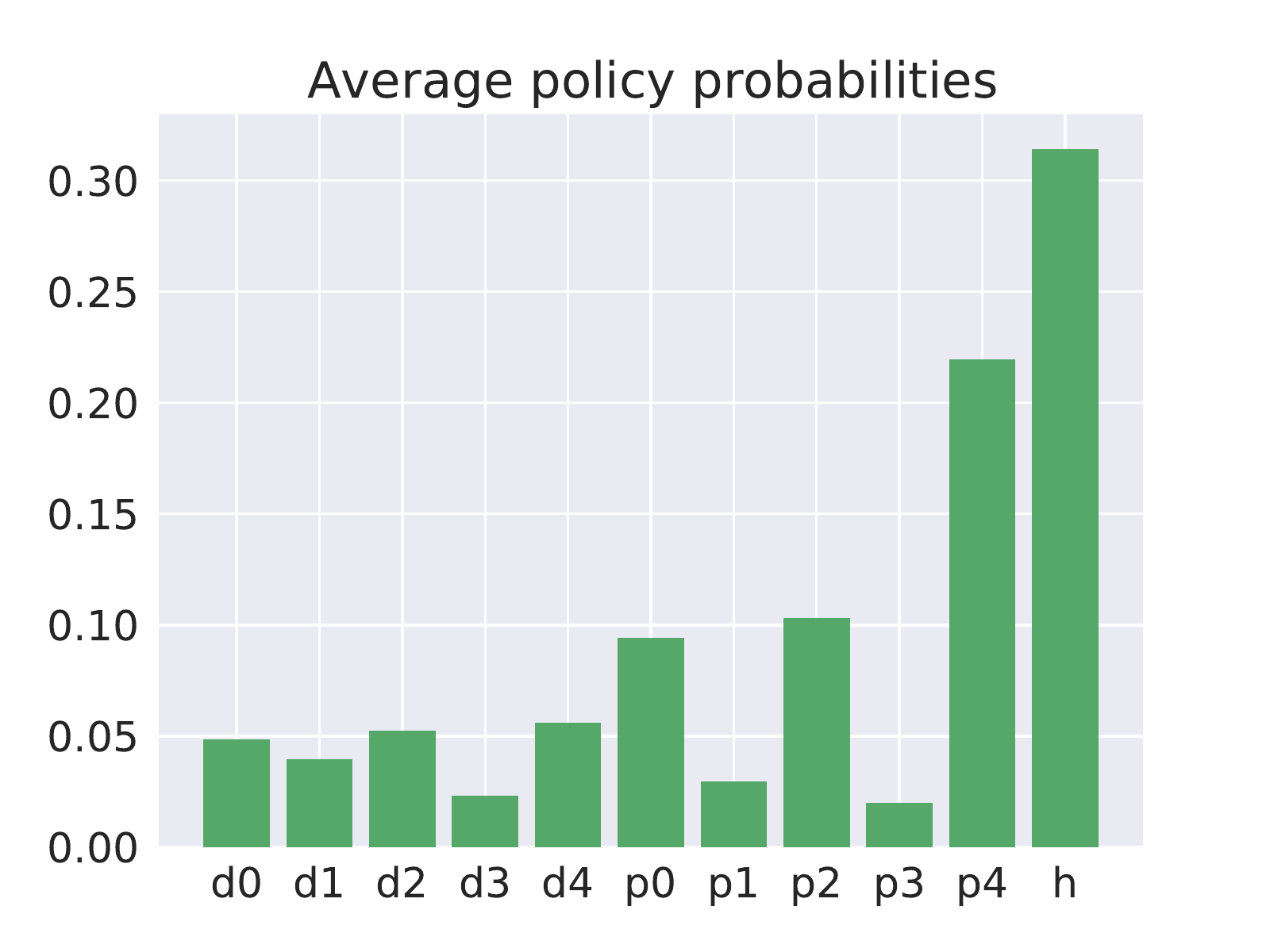}
    \caption{Average action selection probabilities of one VPG run during the last epoch (about 1000 actions). The labels are: $d$ for discard, $p$ for play, and $h$ for giving a random hint. The numbers next to $d$ or $p$ indicate from which index (position in the agent's hand) a card is chosen for that action. New cards always enter the hand at index 4, other cards slide to the left (one index lower) if necessary.}
    \label{fig:probs-histo}
\end{figure}

We want to quantify this positional bias such that we can compare the algorithms. The value should track how large the difference is in the policy's preference for a particular card position relative to the others. 
We define the positional bias as:
\begin{equation*}
    b_{g} = \dfrac{  \max_{i,j \in \mathcal{A}_{g}}\Big( \big| p_i - p_j \big|  \Big) }{ \sum_{k \in \mathcal{A}_{g}} p_k }
\end{equation*}
where $g$ can refer to any subset of actions $\mathcal{A}_{g} \subseteq \mathcal{A}$ and $p_i$ is the average probability of selecting action $i$ under policy $\pi$ given the visited states of the current batch: $p_i = \frac{1}{|B|} \sum_{s \in B} \pi( i | s)$. We track the positional bias of two subsets: the five play actions and the five discard actions.

In words, the positional bias is the greatest distance between two action probabilities within the same subset of actions. On top of that, we rescale this distance to a probability distribution on this specific subset of actions only, to be able to fairly compare the play bias with the discard bias, even if for example the agent discards much more than it plays. The positional bias can take on values between 0 and 1; 0 if the probabilities are all equal, 1 if all the probability mass is on one action. 


The different values of our agents are given in Table~\ref{tab:bias} and plotted throughout training in Figure~\ref{fig:bias-play}.
We see that the best performing agent, VPG, has the lowest positional bias in both categories. Also noteworthy: the play bias is higher than the discard bias for both policy gradient algorithms. Apparently these agents spread out their discard actions more than their play actions, while PPO does not.

\begin{figure}
    \centering
    \includegraphics[width=\linewidth]{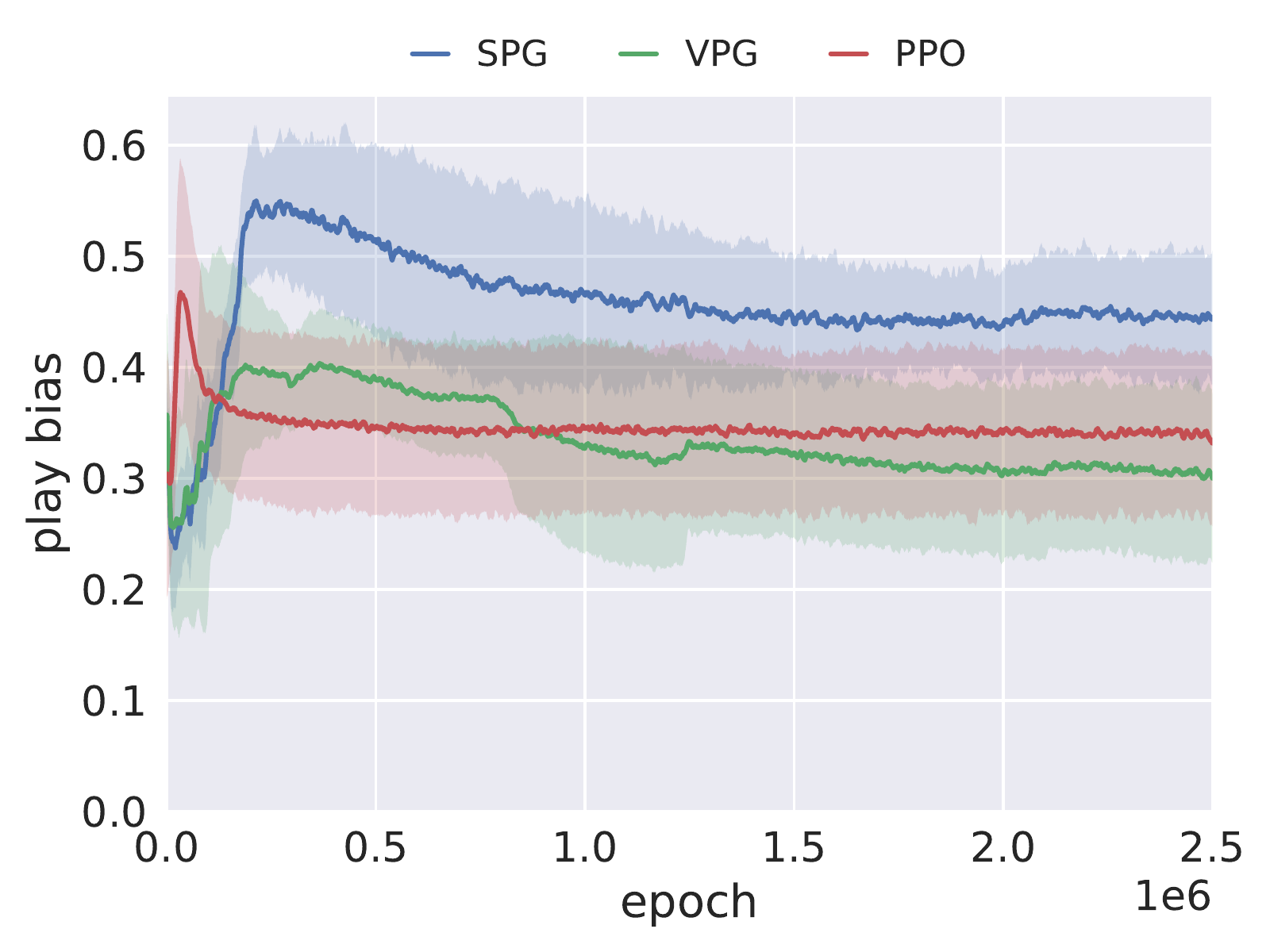}
    \caption{Positional bias of the play actions during training. The development of discard bias is shown in Appendix~\ref{sec:extra-figs}.}
    \label{fig:bias-play}
\end{figure}

At first thought it might seem best to minimize this bias. However, when looking at human play, a certain positional bias is often present as well, for example when applying the popular `chop' convention \cite{conventions}. It says that if you choose to discard, always discard your oldest card which has not received any hints. Unfortunately we cannot say whether we noticed this behavior in our agents, because in our simplified version of Hanabi the agents only give random hints. It would be interesting to see if state-of-the-art Hanabi agents have a high or low positional bias. Our hypothesis is that there is a substantial positional bias, given that for example the BAD\footnote{For Bayesian Action Decoder, see Section~\ref{sec:related-work}.} agent seems to play quite human-like according to their anecdotal analysis \cite{bad}.

\begin{table}[]
\caption{Average positional bias of our agents after 2.5 million epochs of training.}
\label{tab:bias}
\begin{tabular}{c|cc}
    & play bias & discard bias \\ \hline \rule{0pt}{2.3ex}%
SPG & 0.44      & 0.22      \\
VPG & 0.31      & 0.16      \\
PPO & 0.33      & 0.36              
\end{tabular}
\end{table}

\textbf{Entropy.} In some of our preliminary experiments we noticed that our agent converged towards near-deterministic policies rather quickly, even though these policies did not perform well yet. To stimulate more exploration we included an entropy term in the objective function of our policies for all three algorithms, as is regularly done in reinforcement learning and also mentioned in the PPO paper \cite{ppo}.
The new objective function that our policy network's optimizer tries to maximize becomes:
\begin{equation*}
        J_{\text{new}}(\pi_{\bm{\theta}}) = J_{\text{old}}(\pi_{\bm{\theta}}) + \beta \cdot \mathbb{E}_{\pi}[ H_{\pi_{\bm{\theta}}}(S_t) ]
\end{equation*}
where $H$ denotes the information theoretic definition of entropy:
\begin{equation*}
    H_{\pi_{\bm{\theta}}}(s) = - \sum_{a \in \mathcal{A}} \pi_{\bm{\theta}}(a | s) \log \pi_{\bm{\theta}}(a | s)
\end{equation*}
and $\beta$ is the entropy coefficient, which we set to $0.01$ after some fine-tuning.

{
}

During training we kept track of the average entropy of our policies, shown in Figure~\ref{fig:entropy}.
It is noteworthy that PPO has the lowest entropy of all, but does not outperform the others. Our hypothesis is that it has a tendency to get stuck in local minima due to its clipping operation. By clipping the gradient's elements for some parameters, PPO perhaps limits its own learning potential.

\begin{figure}
    \centering
    \includegraphics[width=\linewidth]{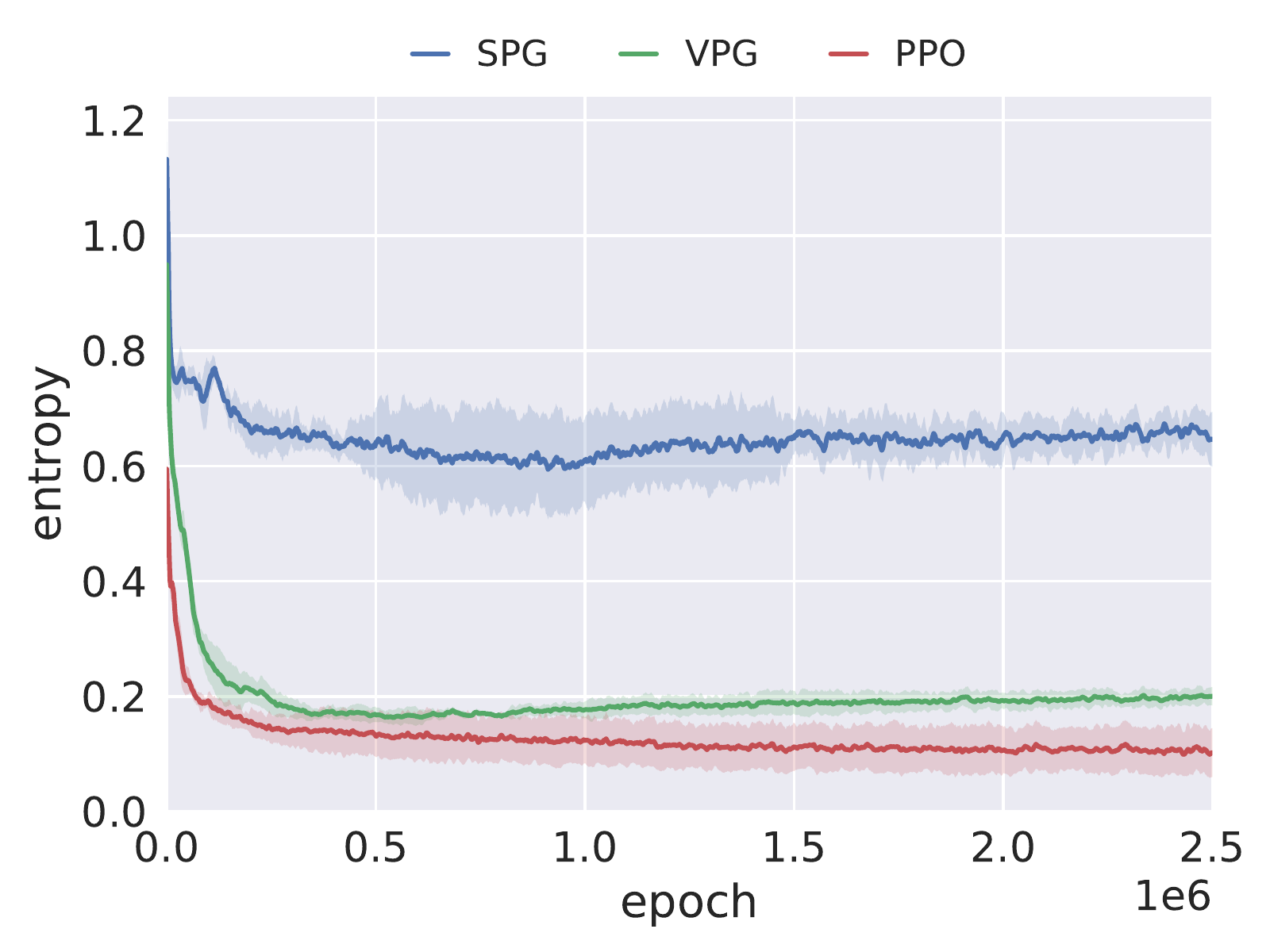}
    \caption{Average entropy of the policies during training.}
    \label{fig:entropy}
\end{figure}

\subsection{Game length}\label{sec:game-length}

To have our algorithms play at least 10 episodes per epoch, we needed to know the maximum length of a Hanabi game. 
We notice that our algorithms take an average of 64.1 steps to complete a Hanabi game at the end of training. 
The maximum length out of the final 1000 test games that each of our agents played was 72 turns.
The real maximum is actually quite a bit higher.

\begin{proposition}\label{89}
The maximum length of a Hanabi game is 89 turns.
\end{proposition}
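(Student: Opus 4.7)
The plan is to decompose any game into a \emph{deck phase}, during which the deck is non-empty, and a \emph{final phase} of exactly two extra turns (one per player in 2-player Hanabi) triggered when the deck runs out. Because every play or discard draws one card while a hint does not, and the deck begins with $50 - 2 \cdot 5 = 40$ cards, the deck phase contains exactly 40 play/discard actions plus some number $Y$ of hints; the final phase contributes exactly two more actions regardless of type. Hence the total game length is $N = 40 + Y + 2 = 42 + Y$, and it suffices to show $Y \le 47$ with equality attainable.

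For the upper bound I would use hint-token accounting. All $Y$ hints occur in the deck phase, and each must be strictly before the 40th (and last) draw, since that final draw is itself a play or discard. At the instant just before the 40th draw, the first 39 draws have already happened; writing $D'$ for how many of those were discards, $F'$ for the number of fireworks completed so far, and $P' = 39 - D'$ for the plays so far, nonnegativity of the token counter forces $Y \le 8 + D' + F'$. Since each firework completion costs at least five successful plays, $F' \le P'/5$, so
\[
Y \le 8 + (39 - P') + P'/5 = 47 - 4P'/5 \le 47,
\]
which gives $N \le 89$.

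For achievability I would exhibit an explicit schedule. Start with 8 consecutive hints on turns $1$--$8$, draining the token counter to zero without wasting any gains (nothing has been discarded yet). Then alternate discard, hint, discard, hint, \ldots{} for 79 turns, producing 40 discards and 39 additional hints while the counter oscillates between $0$ and $1$; the 40th discard (turn 87) empties the deck. Turns 88 and 89 constitute the final phase and can be filled with, say, one hint (the remaining token allows it) and one discard. One verifies directly that no token cap is ever reached (the initial 8 tokens are spent before any discard produces a gain), no life is lost (no card is ever played), and every action is legal, yielding $8 + 79 + 2 = 89$ turns.

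The main obstacle is the timing-sensitive step in the upper bound. The naive estimate $Y \le 8 + D + F$ yields only $N \le 90$, because it credits $Y$ with the token generated by the very last draw; the sharper bound requires restricting the token ledger to the first 39 draws. A minor case split (is the last draw a discard, a non-completing play, or a play that finishes a firework?) is needed to confirm that in each case the token produced by the 40th draw cannot be spent on a hint within the deck phase, so $Y \le 47$ holds uniformly.
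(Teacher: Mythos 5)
Your argument is correct for the two-player case and reaches the bound by a genuinely different route than the paper. The paper defines a turn-indexed potential $\Sigma_t = t + c_t + d_t + u_t + p_t$ and verifies, action type by action type, that it never increases, so $t \le \Sigma_0 = 89$; you instead do a single global count, splitting the game into a deck phase (at most $40$ draws plus $Y$ hints) and a $p$-turn final phase, and bounding $Y$ by hint-token conservation. The two proofs hinge on the same off-by-one insight: the token produced by the draw that empties the deck can never be spent on a deck-phase hint. In the paper this is encoded as the ``undisclosed hints'' value $u_t = d_t - 1$; in your write-up it is the restriction of the token ledger to the first $39$ draws. Your version buys a shorter, more self-contained argument that avoids the paper's case table and the somewhat ad hoc definitions of $c_t$ and $u_t$; the paper's monovariant buys reusability --- the same $\Sigma_t$ is immediately redeployed to prove the $71$-turn bound for perfect games, where your global count would need to be redone with play actions forced in.

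One genuine (though easily repaired) gap: the proposition, as the paper proves it, is about Hanabi games with any number of players, and your accounting is set up only for $p = 2$ (deck size $40$, final phase of $2$ turns). You need to also rule out that a $3$-, $4$-, or $5$-player game could exceed $89$ turns. Your own method gives this at once: with initial deck size $d_0$ and $p$ players the same count yields $N \le d_0 + \bigl(8 + (d_0 - 1)\bigr) + p = 2d_0 + 7 + p$, which evaluates to $89, 80, 79, 72$ for $p = 2,3,4,5$ (matching the paper's table of $\Sigma_0$), so the two-player case is indeed the extremal one. Without that sentence the stated proposition is not fully proved.
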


\begin{proof}
This proof consists of two parts. First we will show that there exists a Hanabi game of length 89. In Part 2
we prove that no Hanabi game can have a higher number of turns than 89.

\textit{Part 1.}
Take a Hanabi game of two players. At the start, each player has 5 cards so there are 40 cards left in the deck. Suppose the players start the game by giving hints until all information tokens are gone. This takes 8 turns.
Then they start a pattern by alternating one discard action and one hint action, continuing until the deck is empty. After the last discard action (which empties the deck) there have been 40 discard actions, with 39 hints in between. Each player gets one more turn, in which they could discard another card. This gives a total of $8 + 40 + 39 + 2 = 89$ turns. 

\textit{Part 2.}
In this part we define a value $\Sigma_t$ for a Hanabi game. We will show that it is impossible for this value to increase during the game ($\Sigma_{t+1} \leq \Sigma_{t} \ \forall t$) from which the maximum number of turns follows. 
We first define a few values: 

\begin{center}
    \begin{tabular}{cl}
        $t$ & total number of turns taken  \\
        $d_t$ & deck size after turn $t$ \\
        $m_t$ & hint tokens left over after turn $t$ \\ 
    \end{tabular}
\end{center}
We further define $c_t$, which stands for the number of hint tokens left over after turn $t$, but with the restriction that these tokens can still be used before the deck is empty:
\begin{equation}\label{ctdef}
    c_t = \begin{cases}
    m_t & \text{ if } d_t > 0,\\
    0 & \text{ if } d_t = 0.
    \end{cases}
\end{equation}
We add the restriction to $c_t$ here to distinguish between the situations before and after the deck has been emptied. Once the deck is empty, hint actions cannot be used to stall the game anymore. When $d_t = 0$, there is a fixed maximum number of turns left, which we denote by $p_t$ (initially equal to the number of players $p$). 

We define one more value: $u_t$, which we call the \textit{undisclosed hints}. This value counts the number of cards that can still retrieve a hint token which can be used before the deck is empty. We have:
\begin{equation}\label{utdef}
    u_t = \begin{cases}
    d_t - 1 & \text{ if } d_t > 0,\\
    0 & \text{ if } d_t = 0.
    \end{cases}
\end{equation}
Every card that is played or discarded can retrieve a hint token. This can be done $d_0$ times in total and then the deck is empty. However, if the last card that empties the deck retrieves a hint token, this token is only usable after the deck is empty. Thus, the value of $u_t$ is always one less than the current deck size $d_t$ (except when the deck is already empty).

Our $\Sigma_t$ is now defined as the sum over these previous values:
\begin{equation}\label{eq:sigmat}
    \Sigma_t = t + c_t + d_t + u_t + p_t
\end{equation}
and can be interpreted as the maximum possible number of total turns that is still reachable, after time step $t$.

\balance

We will now look into the effect of different actions on the values of $t$, $c_t$, $d_t$, $u_t$, $p_t$, and thus $\Sigma_t$.
A player can choose three actions in each turn: play, discard, or hint. The effect of each action on the different values is summarized in Table \ref{tab:action-effects}.

\begin{table}[]
\centering
\begin{threeparttable}[t]
\caption{Effect of actions on the different values.\\The three exceptions at the bottom have priority over the three standard actions at the top.}
\label{tab:action-effects}
\setlength\tabcolsep{4pt}%
\begin{tabular}{l|lllll|l}
action $a_t$ & $\Delta t$ & $\Delta c_t$ & $\Delta d_t$ & $\Delta u_t$ & $\Delta p_t$ & $\Delta\Sigma_t$ \\ \hline \rule{0pt}{2.3ex}%
play    & $+1$ & $0$   & $-1$  & $-1$  & $0$  & $-1$  \\
discard & $+1$ & $+1$  & $-1$  &  $-1$ & $0$  & $0$  \\
hint    & $+1$ & $-1$  &  $0$  &  $0$  & $0$  & $0$ \\ \hline \rule{0pt}{2.3ex}%
play a rank 5 successfully *
& $+1$ & $+1$ & $-1$ & $-1$ & $0$ & $0$ \\
$a_t$ empties the deck & $+1$ & $- c_{t-1}$ & $-1$  &  $0$ & $0$  & $\leq 0$  \\
$a_t$ while the deck is empty & $+1$ & $0$ & $0$  & $0$  & $-1$   &  $0$ \\
\end{tabular}
\begin{tablenotes}
* \footnotesize Only if $d_{t-1} > 1$ (otherwise it counts as an action that empties the deck or happens while the deck is empty) and $m_{t-1} < 8$ (otherwise it counts as a normal play action, since we do not gain a hint token with a rank 5 card if the hint budget is already full).
\end{tablenotes}
\end{threeparttable}
\end{table}

We see that the value of $\Sigma_t$ can never increase during a game. Furthermore, the values $t$, $d_t$, $p_t$, and $m_t$ must always stay non-negative according to the rules of Hanabi. This also implies that the values of $c_t$ and $u_t$ must always be non-negative, since $m_t$ and $d_t$ in \eqref{ctdef} and \eqref{utdef} are non-negative and integer. With this information, and from \eqref{eq:sigmat}, we can conclude that we must always have $t \leq \Sigma_t$.

Thus, the maximum value that $t$ could possibly reach is equal to the value of $\Sigma_0$ (before any action has been taken). We compute these starting values for every possible number of players $p$:
\vspace{-5pt}
\begin{table}[H]
    \centering
    \caption{Starting values of $\Sigma_t$.}
    \label{tab:sigma0}
\vspace{-12pt}
\begin{center}
    \begin{tabular}{c|cccc}
        $p$ & 2 & 3 & 4 & 5 \\ \hline \rule{0pt}{2.3ex}%
        $\Sigma_0$ & 89 & 80 & 79 & 72
    \end{tabular}
\end{center}
\end{table}
\vspace{-12pt}
As shown in Part 1, there is a particular sequence of actions in a Hanabi game, that gives the following outcome:
\begin{center}
    \begin{tabular}{l|lllll|l}
         & $t$ & $c_t$ & $d_t$ & $u_t$ & $p_t$ & $\Sigma_t$ \\ \hline \rule{0pt}{2.3ex}%
        start  & $0$ & $8$ & $40$ & $39$ & $2$ &  $89$ \\
        end & $89$ & $0$   & $0$ &  $0$ & $0$ & $89$  
    \end{tabular}
\end{center}
Therefore, the maximum length of a Hanabi game is 89 turns.

\end{proof}

As demonstrated in part 1 of the proof of Proposition~\ref{89} this maximum length can be reached if many cards are discarded and none are played. We are particularly interested in games where the algorithms perform well, i.e. score 25 points. The maximum length of a so-called \textit{perfect game} is 71. We found a different number (65) in the literature \cite{bad}, but this is incorrect.

\begin{proposition}\label{71}
The maximum length of a perfect Hanabi game is 71 turns.
\end{proposition}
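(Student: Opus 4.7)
The plan is to mirror the proof of Proposition~\ref{89}: Part~1 exhibits an explicit 71-turn perfect 2-player game, and Part~2 upper-bounds $t$ via the same potential $\Sigma_t$ under the additional constraint that a perfect game ends at the instant the score reaches~25.

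For Part~1 I would construct a game in which four color stacks (say R, G, B, Y) are completed while the deck still has cards, while the fifth color (W) is held at rank~3 so that the final round plays exactly W4 and W5. The schedule: (i)~spend the 8 initial hints, taking $m$ from~$8$ down to~$0$; (ii)~fill the rest of the pre-final phase with 21 ``gain-hint'' pairs---each gain being one of 17 discards or one of 4 rank-5 completions that satisfies $d_{t-1}>1$ and $m_{t-1}<8$, and hence qualifies as the rank-5 exception of Table~\ref{tab:action-effects}---interleaved with 19 rank-$1$--$4$ plays, and arrange the 40th and last deck-draw to be a rank-$1$--$4$ play that empties the deck at a moment when $c_{t-1}=0$; (iii)~play W4 and W5 during the two final-round turns. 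This gives $8 + 2\cdot 21 + 19 + 2 = 71$ turns, with 25 successful plays and score~$25$.

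For Part~2 I take $\Sigma_0 = 89$ and classify each of the 25 successful plays via Table~\ref{tab:action-effects}. Each play contributes $\Delta\Sigma = -1$ unless it falls into one of three mutually exclusive ``free'' categories with $\Delta\Sigma = 0$: a rank-5 exception, the deck-emptying action with $c_{t-1}=0$, or a play while the deck is empty. These have individual caps of $5$, $1$, and $p=2$, but the four bounds cannot all be saturated. The decisive observation is that the 25th play is a rank-5 card and ends the game at once, so if all 5 rank-5 plays were exceptions then every stack would complete while $d \geq 2$, terminating the game before the deck ever empties and leaving $d$ and $p$ strictly positive at the end, which via $\Sigma_t = t + c_t + d_t + u_t + p_t$ pushes $t$ well below~71. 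Hence to reach the end of the final round the rank-5 exceptions must be capped at $4$, giving at most $4 + 1 + 2 = 7$ free plays and a total $\Sigma$-loss of at least $25 - 7 = 18$. Since the best scenario ends with $c_t = d_t = u_t = p_t = 0$, we obtain $t = \Sigma_t \leq 89 - 18 = 71$.

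I expect the main obstacle to be a clean case analysis over how the game can end: (i)~before the deck empties, (ii)~exactly at the deck-emptying action (necessarily a rank-5 play), or (iii)~during the final round with one or two plays after empty. For each case I would count the maximum number of free plays and the residual $c + d + u + p$ at the terminal time, showing that only case~(iii) with two plays after empty can attain $t = 71$---matching the construction of Part~1---while cases~(i) and~(ii) strictly lose through a nonzero terminal $p$ or $d$.
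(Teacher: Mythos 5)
Your proposal is correct and follows essentially the same route as the paper's proof: an explicit 71-turn construction in Part~1 (spend the 8 hints, complete four stacks to recycle their rank-5 tokens, pad with 17 discard--hint pairs, and finish the fifth stack across the deck-emptying move and the final round), and in Part~2 the same potential $\Sigma_t$ with the same classification of the 25 plays into at most $5+p=7$ exception plays via a case analysis on how the game ends. Your reorganization of that case analysis (arguing that saturating all five rank-5 exceptions forces termination with $d_t, p_t > 0$ and hence $t$ well below 71) is just a repackaging of the paper's enumeration of the three ending scenarios, so the two arguments are materially identical.
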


\begin{proof}
This proof also consists of two parts.
First we will show that there exists a perfect Hanabi game of length 71. Part 2 proves that no perfect Hanabi game can have a higher number of turns than 71.

\textit{Part 1.}
Again, take a two-player Hanabi game. The initial deck size is 40. The players start out by spending their 8 hints. Then they play 22 cards successfully, finishing four fireworks. This gives them 4 extra hints, which they use immediately. The players now start a pattern of first discarding one card, and then giving one hint. This can be done 17 times. Then 1 card is played successfully that empties the deck. Both players have one more turn, in which they successfully play the rank 4 and 5 cards of the remaining firework. The number of turns is $8 + 22 + 4 + 17\cdot 2 + 1 + 2 = 71$.

\textit{Part 2.}
We use the same values as defined in part 2 of the proof of Proposition~\ref{89}. We will show that the value of $\Sigma_t$ must decrease to at most 71 for a game to finish in a perfect score.

To reach this score of 25, we need at least 25 play actions of course. In Table \ref{tab:action-effects} it is shown that every play action decreases $\Sigma_t$ by 1, aside from a few exceptions. These exceptions are:
\begin{enumerate}[  (1.)]
    \item Play a rank 5 card successfully when the number of hint tokens is less than 8 and the deck is not empty.
    \item Any action that empties the deck when the number of hint tokens is 0.
    \item Any action when the deck is empty.
\end{enumerate}
These exceptions can all be play actions that do not decrease the value of $\Sigma_t$. Let's try to keep $\Sigma_t$ as high as possible (as it represents the maximum number of turns we can reach) while still scoring 25 points. Thus, we need to make sure as many play actions as possible are classified as one of the three exceptions.

A perfect game can end before the deck is empty, on the deck-emptying move, or when it is already empty. Let us investigate the maximum number of exception play moves in all cases.

If the game ends,
\begin{itemize}
    \item before the deck is empty: we can use exception (1.) five times,
    \item on the deck-emptying move: we can use (1.) four times and (2.) once, 
    \item when the deck is empty: we can use (1.) four times,
    (2.) once, and (3.) $p$ number of times. Recall that $p$ stands for the number of players. 
\end{itemize}

From all these cases, we see that the maximum possible number of exception play moves is $5 + p$. In a two player game, this would mean that 7 play moves do not decrease $\Sigma_t$, while the other $25-7=18$ do. The maximum number of turns in that case is $89 - 18 = 71$. Recall that 89 is the starting value of $\Sigma_t$ in the two player case, see Table \ref{tab:sigma0}.

\begin{table}[H]
\centering
\caption{Maximum potential number of turns.}
\label{tab:perfect-p}
\vspace{-5pt}
\begin{tabular}{c|c}
\# players & maximum value of $\Sigma_t$ at end of perfect game \\ \hline \rule{0pt}{2.3ex}%
2   & $89 - (25 - (5 + 2)) = 71$                   \\
3   & $80 - (25 - (5 + 3)) = 63$                   \\
4   & $79 - (25 - (5 + 4)) = 63$                   \\
5   & $72 - (25 - (5 + 5)) = 57$                  
\end{tabular}
\end{table}
An overview of the maximum potential number of turns for different values of $p$ is shown in Table~\ref{tab:perfect-p}. We see that in the two player case this value is the highest, meaning that no perfect Hanabi game can possibly be longer than 71 turns. In Part 1 we have shown that a perfect game of this length is indeed possible. Therefore, the maximum length of a perfect Hanabi game is 71 turns. 

\end{proof}

\section{Conclusion}
\label{sec:conclusion}

We have applied several actor-critic algorithms to Hanabi, a relatively new benchmark for collaborative multi-agent deep reinforcement learning. Using a simplified version of the game, we notice in our experiments that the Vanilla Policy Gradient (VPG) algorithm outperforms Proximal Policy Optimization (PPO) over multiple random seeds. In our analysis we see that although PPO learns quicker in the beginning, it eventually hits a plateau giving VPG the chance to surpass it. We hypothesize that PPO's clipping operation might be a reason for getting stuck in local minima.
Our small hyperparameter search is a limitation of this study, further research would be necessary to confirm the findings.




\begin{acks}

Thank you to Decebal Constantin Mocanu for his ongoing guidance, and to Qiao Xiao and Mickey Beurskens for reviewing the paper. Also, much graditude goes to Nolan Bard for helping us set up his team's Hanabi Learning Environment.

\end{acks}


\bibliographystyle{ACM-Reference-Format} 
\bibliography{references}


\begin{thebibliography}{23}


\ifx \showCODEN    \undefined \def \showCODEN     #1{\unskip}     \fi
\ifx \showDOI      \undefined \def \showDOI       #1{#1}\fi
\ifx \showISBNx    \undefined \def \showISBNx     #1{\unskip}     \fi
\ifx \showISBNxiii \undefined \def \showISBNxiii  #1{\unskip}     \fi
\ifx \showISSN     \undefined \def \showISSN      #1{\unskip}     \fi
\ifx \showLCCN     \undefined \def \showLCCN      #1{\unskip}     \fi
\ifx \shownote     \undefined \def \shownote      #1{#1}          \fi
\ifx \showarticletitle \undefined \def \showarticletitle #1{#1}   \fi
\ifx \showURL      \undefined \def \showURL       {\relax}        \fi
\providecommand\bibfield[2]{#2}
\providecommand\bibinfo[2]{#2}
\providecommand\natexlab[1]{#1}
\providecommand\showeprint[2][]{arXiv:#2}

\bibitem[\protect\citeauthoryear{Achiam}{Achiam}{2018}]%
        {SpinningUp2018}
\bibfield{author}{\bibinfo{person}{Joshua Achiam}.}
  \bibinfo{year}{2018}\natexlab{}.
\newblock \showarticletitle{{Spinning Up in Deep Reinforcement Learning}}.
\newblock  (\bibinfo{year}{2018}).
\newblock
\newblock
\shownote{\url{https://spinningup.openai.com/}.}


\bibitem[\protect\citeauthoryear{Baffier, Chiu, Diez, Korman, Mitsou, van
  Renssen, Roeloffzen, and Uno}{Baffier et~al\mbox{.}}{2016}]%
        {NP-complete}
\bibfield{author}{\bibinfo{person}{Jean{-}Fran{\c{c}}ois Baffier},
  \bibinfo{person}{Man{-}Kwun Chiu}, \bibinfo{person}{Yago Diez},
  \bibinfo{person}{Matias Korman}, \bibinfo{person}{Valia Mitsou},
  \bibinfo{person}{Andr{\'{e}} van Renssen}, \bibinfo{person}{Marcel
  Roeloffzen}, {and} \bibinfo{person}{Yushi Uno}.}
  \bibinfo{year}{2016}\natexlab{}.
\newblock \showarticletitle{{Hanabi is NP-complete, Even for Cheaters who Look
  at Their Cards}}.
\newblock \bibinfo{journal}{\emph{CoRR}}  \bibinfo{volume}{abs/1603.01911}
  (\bibinfo{year}{2016}).
\newblock
\showeprint[arxiv]{1603.01911}
\urldef\tempurl%
\url{http://arxiv.org/abs/1603.01911}
\showURL{%
\tempurl}


\bibitem[\protect\citeauthoryear{Bard, Foerster, Chandar, Burch, Lanctot, Song,
  Parisotto, Dumoulin, Moitra, Hughes, Dunning, Mourad, Larochelle, Bellemare,
  and Bowling}{Bard et~al\mbox{.}}{2020}]%
        {bard}
\bibfield{author}{\bibinfo{person}{Nolan Bard}, \bibinfo{person}{Jakob
  Foerster}, \bibinfo{person}{Sarath Chandar}, \bibinfo{person}{Neil Burch},
  \bibinfo{person}{Marc Lanctot}, \bibinfo{person}{H.~Francis Song},
  \bibinfo{person}{Emilio Parisotto}, \bibinfo{person}{Vincent Dumoulin},
  \bibinfo{person}{Subhodeep Moitra}, \bibinfo{person}{Edward Hughes},
  \bibinfo{person}{Iain Dunning}, \bibinfo{person}{Shibl Mourad},
  \bibinfo{person}{Hugo Larochelle}, \bibinfo{person}{Marc Bellemare}, {and}
  \bibinfo{person}{Michael Bowling}.} \bibinfo{year}{2020}\natexlab{}.
\newblock \showarticletitle{{The Hanabi challenge: A new frontier for AI
  research}}.
\newblock \bibinfo{journal}{\emph{Artificial Intelligence}}
  \bibinfo{volume}{280} (\bibinfo{year}{2020}), \bibinfo{pages}{103216}.
\newblock
\showISSN{0004-3702}
\urldef\tempurl%
\url{https://doi.org/10.1016/j.artint.2019.103216}
\showDOI{\tempurl}


\bibitem[\protect\citeauthoryear{Canaan, Togelius, Nealen, and Menzel}{Canaan
  et~al\mbox{.}}{2019}]%
        {canaan2019}
\bibfield{author}{\bibinfo{person}{Rodrigo Canaan}, \bibinfo{person}{Julian
  Togelius}, \bibinfo{person}{Andy Nealen}, {and} \bibinfo{person}{Stefan
  Menzel}.} \bibinfo{year}{2019}\natexlab{}.
\newblock \showarticletitle{{Diverse Agents for Ad-Hoc Cooperation in Hanabi}}.
\newblock \bibinfo{journal}{\emph{CoRR}}  \bibinfo{volume}{abs/1907.03840}
  (\bibinfo{year}{2019}).
\newblock
\showeprint[arxiv]{1907.03840}
\urldef\tempurl%
\url{http://arxiv.org/abs/1907.03840}
\showURL{%
\tempurl}


\bibitem[\protect\citeauthoryear{Cox, de~Silva, Deorsey, Kenter, Retter, and
  Tobin}{Cox et~al\mbox{.}}{2015}]%
        {cox-hat}
\bibfield{author}{\bibinfo{person}{Christopher Cox}, \bibinfo{person}{Jessica
  de Silva}, \bibinfo{person}{Philip Deorsey}, \bibinfo{person}{Franklin
  Kenter}, \bibinfo{person}{Troy Retter}, {and} \bibinfo{person}{Josh Tobin}.}
  \bibinfo{year}{2015}\natexlab{}.
\newblock \showarticletitle{{How to Make the Perfect Fireworks Display: Two
  Strategies for Hanabi}}.
\newblock \bibinfo{journal}{\emph{Mathematics Magazine}} \bibinfo{volume}{88},
  \bibinfo{number}{5} (\bibinfo{year}{2015}), \bibinfo{pages}{323--336}.
\newblock
\showISSN{0025570X, 19300980}
\urldef\tempurl%
\url{http://www.jstor.org/stable/10.4169/math.mag.88.5.323}
\showURL{%
\tempurl}


\bibitem[\protect\citeauthoryear{Eger, Martens, and C{\'o}rdoba}{Eger
  et~al\mbox{.}}{2017}]%
        {eger2017}
\bibfield{author}{\bibinfo{person}{Markus Eger}, \bibinfo{person}{Chris
  Martens}, {and} \bibinfo{person}{Marcela~Alfaro C{\'o}rdoba}.}
  \bibinfo{year}{2017}\natexlab{}.
\newblock \showarticletitle{{An Intentional AI for Hanabi}}. In
  \bibinfo{booktitle}{\emph{2017 IEEE Conf. on Computational Intelligence and
  Games (CIG)}}. IEEE, \bibinfo{pages}{68--75}.
\newblock


\bibitem[\protect\citeauthoryear{et~al.}{et~al.}{[n.d.]}]%
        {conventions}
\bibfield{author}{\bibinfo{person}{James~Nesta et al.}}
  \bibinfo{year}{[n.d.]}\natexlab{}.
\newblock \bibinfo{booktitle}{\emph{{Hanabi Conventions for The Hyphen-ated
  Group}}}.
\newblock
\newblock
\shownote{\url{https://github.com/Zamiell/hanabi-conventions}. Accessed:
  2020-09-16.}


\bibitem[\protect\citeauthoryear{Fickinger, Hu, Amos, Russell, and
  Brown}{Fickinger et~al\mbox{.}}{2021}]%
        {rlsearch}
\bibfield{author}{\bibinfo{person}{Arnaud Fickinger}, \bibinfo{person}{Hengyuan
  Hu}, \bibinfo{person}{Brandon Amos}, \bibinfo{person}{Stuart Russell}, {and}
  \bibinfo{person}{Noam Brown}.} \bibinfo{year}{2021}\natexlab{}.
\newblock \showarticletitle{{Scalable Online Planning via Reinforcement
  Learning Fine-Tuning}}.
\newblock \bibinfo{journal}{\emph{Advances in Neural Information Processing
  Systems}}  \bibinfo{volume}{34} (\bibinfo{year}{2021}).
\newblock


\bibitem[\protect\citeauthoryear{Foerster, Song, Hughes, Burch, Dunning,
  Whiteson, Botvinick, and Bowling}{Foerster et~al\mbox{.}}{2018}]%
        {bad}
\bibfield{author}{\bibinfo{person}{Jakob Foerster}, \bibinfo{person}{H.~Francis
  Song}, \bibinfo{person}{Edward Hughes}, \bibinfo{person}{Neil Burch},
  \bibinfo{person}{Iain Dunning}, \bibinfo{person}{Shimon Whiteson},
  \bibinfo{person}{Matthew Botvinick}, {and} \bibinfo{person}{Michael
  Bowling}.} \bibinfo{year}{2018}\natexlab{}.
\newblock \showarticletitle{{Bayesian Action Decoder for Deep Multi-Agent
  Reinforcement Learning}}.
\newblock \bibinfo{journal}{\emph{CoRR}}  \bibinfo{volume}{abs/1811.01458}
  (\bibinfo{year}{2018}).
\newblock
\showeprint[arxiv]{1811.01458}
\urldef\tempurl%
\url{http://arxiv.org/abs/1811.01458}
\showURL{%
\tempurl}


\bibitem[\protect\citeauthoryear{Grooten}{Grooten}{2021}]%
        {thesisBram}
\bibfield{author}{\bibinfo{person}{Bram Grooten}.}
  \bibinfo{year}{2021}\natexlab{}.
\newblock \emph{\bibinfo{title}{{Deep Reinforcement Learning for the
  cooperative card game Hanabi}}}.
\newblock \bibinfo{thesistype}{Master's\ thesis}. \bibinfo{school}{Eindhoven
  University of Technology}.
\newblock
\newblock
\shownote{\url{https://research.tue.nl/en/studentTheses/deep-reinforcement-learning-for-the-cooperative-card-game-hanabi}.}


\bibitem[\protect\citeauthoryear{Hessel, Modayil, van Hasselt, Schaul,
  Ostrovski, Dabney, Horgan, Piot, Azar, and Silver}{Hessel
  et~al\mbox{.}}{2017}]%
        {rainbow}
\bibfield{author}{\bibinfo{person}{Matteo Hessel}, \bibinfo{person}{Joseph
  Modayil}, \bibinfo{person}{Hado van Hasselt}, \bibinfo{person}{Tom Schaul},
  \bibinfo{person}{Georg Ostrovski}, \bibinfo{person}{Will Dabney},
  \bibinfo{person}{Daniel Horgan}, \bibinfo{person}{Bilal Piot},
  \bibinfo{person}{Mohammad~Gheshlaghi Azar}, {and} \bibinfo{person}{David
  Silver}.} \bibinfo{year}{2017}\natexlab{}.
\newblock \showarticletitle{{Rainbow: Combining Improvements in Deep
  Reinforcement Learning}}.
\newblock \bibinfo{journal}{\emph{CoRR}}  \bibinfo{volume}{abs/1710.02298}
  (\bibinfo{year}{2017}).
\newblock
\showeprint[arxiv]{1710.02298}
\urldef\tempurl%
\url{http://arxiv.org/abs/1710.02298}
\showURL{%
\tempurl}


\bibitem[\protect\citeauthoryear{Hu and Foerster}{Hu and Foerster}{2019}]%
        {sad}
\bibfield{author}{\bibinfo{person}{Hengyuan Hu} {and} \bibinfo{person}{Jakob
  Foerster}.} \bibinfo{year}{2019}\natexlab{}.
\newblock \showarticletitle{{Simplified Action Decoder for Deep Multi-Agent
  Reinforcement Learning}}.
\newblock \bibinfo{journal}{\emph{arXiv preprint arXiv:1912.02288}}
  (\bibinfo{year}{2019}).
\newblock
\showeprint[arxiv]{1912.02288}~[cs.AI]
\urldef\tempurl%
\url{https://arxiv.org/abs/1912.02288}
\showURL{%
\tempurl}


\bibitem[\protect\citeauthoryear{Lerer, Hu, Foerster, and Brown}{Lerer
  et~al\mbox{.}}{2020}]%
        {search}
\bibfield{author}{\bibinfo{person}{Adam Lerer}, \bibinfo{person}{Hengyuan Hu},
  \bibinfo{person}{Jakob Foerster}, {and} \bibinfo{person}{Noam Brown}.}
  \bibinfo{year}{2020}\natexlab{}.
\newblock \showarticletitle{{Improving Policies via Search in Cooperative
  Partially Observable Games}}.
\newblock \bibinfo{journal}{\emph{Proc. of the AAAI Conf. on Artificial
  Intelligence}} \bibinfo{volume}{34}, \bibinfo{number}{05}
  (\bibinfo{date}{April} \bibinfo{year}{2020}), \bibinfo{pages}{7187–7194}.
\newblock
\showISSN{2159-5399}
\urldef\tempurl%
\url{https://doi.org/10.1609/aaai.v34i05.6208}
\showDOI{\tempurl}


\bibitem[\protect\citeauthoryear{Mnih, Badia, Mirza, Graves, Lillicrap, Harley,
  Silver, and Kavukcuoglu}{Mnih et~al\mbox{.}}{2016}]%
        {acha}
\bibfield{author}{\bibinfo{person}{Volodymyr Mnih},
  \bibinfo{person}{Adri{\`{a}}~Puigdom{\`{e}}nech Badia},
  \bibinfo{person}{Mehdi Mirza}, \bibinfo{person}{Alex Graves},
  \bibinfo{person}{Timothy~P. Lillicrap}, \bibinfo{person}{Tim Harley},
  \bibinfo{person}{David Silver}, {and} \bibinfo{person}{Koray Kavukcuoglu}.}
  \bibinfo{year}{2016}\natexlab{}.
\newblock \showarticletitle{{Asynchronous Methods for Deep Reinforcement
  Learning}}.
\newblock \bibinfo{journal}{\emph{CoRR}}  \bibinfo{volume}{abs/1602.01783}
  (\bibinfo{year}{2016}).
\newblock
\showeprint[arxiv]{1602.01783}
\urldef\tempurl%
\url{http://arxiv.org/abs/1602.01783}
\showURL{%
\tempurl}


\bibitem[\protect\citeauthoryear{Nota and Thomas}{Nota and Thomas}{2019}]%
        {nota2020policy}
\bibfield{author}{\bibinfo{person}{Chris Nota} {and} \bibinfo{person}{Philip
  Thomas}.} \bibinfo{year}{2019}\natexlab{}.
\newblock \showarticletitle{{Is the Policy Gradient a Gradient?}}
\newblock \bibinfo{journal}{\emph{arXiv preprint arXiv:1906.07073}}
  (\bibinfo{year}{2019}).
\newblock


\bibitem[\protect\citeauthoryear{O'Dwyer}{O'Dwyer}{2018}]%
        {smartbot}
\bibfield{author}{\bibinfo{person}{Arthur O'Dwyer}.}
  \bibinfo{year}{2018}\natexlab{}.
\newblock \bibinfo{booktitle}{\emph{{Framework for writing bots that play
  Hanabi}}}.
\newblock
\urldef\tempurl%
\url{https://github.com/Quuxplusone/Hanabi}
\showURL{%
\tempurl}
\newblock
\shownote{\url{https://github.com/Quuxplusone/Hanabi}. Accessed: 2020-09-16.}


\bibitem[\protect\citeauthoryear{Schulman, Moritz, Levine, Jordan, and
  Abbeel}{Schulman et~al\mbox{.}}{2015}]%
        {gae}
\bibfield{author}{\bibinfo{person}{John Schulman}, \bibinfo{person}{Philipp
  Moritz}, \bibinfo{person}{Sergey Levine}, \bibinfo{person}{Michael Jordan},
  {and} \bibinfo{person}{Pieter Abbeel}.} \bibinfo{year}{2015}\natexlab{}.
\newblock \showarticletitle{{High-Dimensional Continuous Control Using
  Generalized Advantage Estimation}}.
\newblock \bibinfo{journal}{\emph{arXiv preprint arXiv:1506.02438}}
  (\bibinfo{year}{2015}).
\newblock


\bibitem[\protect\citeauthoryear{Schulman, Wolski, Dhariwal, Radford, and
  Klimov}{Schulman et~al\mbox{.}}{2017}]%
        {ppo}
\bibfield{author}{\bibinfo{person}{John Schulman}, \bibinfo{person}{Filip
  Wolski}, \bibinfo{person}{Prafulla Dhariwal}, \bibinfo{person}{Alec Radford},
  {and} \bibinfo{person}{Oleg Klimov}.} \bibinfo{year}{2017}\natexlab{}.
\newblock \showarticletitle{{Proximal Policy Optimization Algorithms}}.
\newblock \bibinfo{journal}{\emph{arXiv preprint arXiv:1707.06347}}
  (\bibinfo{year}{2017}).
\newblock


\bibitem[\protect\citeauthoryear{van~den Bergh}{van~den Bergh}{2015}]%
        {bergh}
\bibfield{author}{\bibinfo{person}{Mark van~den Bergh}.}
  \bibinfo{year}{2015}\natexlab{}.
\newblock \bibinfo{title}{\textit{Hanabi, a cooperative game of fireworks}}.
\newblock
\newblock
\newblock
\shownote{Bachelor's thesis. Leiden University.
  \url{http://www.math.leidenuniv.nl/scripties/BSC-vandenBergh.pdf}.}


\bibitem[\protect\citeauthoryear{Williams}{Williams}{1992}]%
        {williams92}
\bibfield{author}{\bibinfo{person}{Ronald Williams}.}
  \bibinfo{year}{1992}\natexlab{}.
\newblock \showarticletitle{Simple statistical gradient-following algorithms
  for connectionist reinforcement learning}.
\newblock \bibinfo{journal}{\emph{Machine learning}} \bibinfo{volume}{8},
  \bibinfo{number}{3} (\bibinfo{year}{1992}), \bibinfo{pages}{229--256}.
\newblock


\bibitem[\protect\citeauthoryear{Wu}{Wu}{2018a}]%
        {fireflower}
\bibfield{author}{\bibinfo{person}{David Wu}.}
  \bibinfo{year}{2018}\natexlab{a}.
\newblock \bibinfo{booktitle}{\emph{{A rewrite of Hanabi-bot in Scala}}}.
\newblock
\newblock
\shownote{\url{https://github.com/lightvector/fireflower}. Accessed:
  2020-09-16.}


\bibitem[\protect\citeauthoryear{Wu}{Wu}{2018b}]%
        {wtfwthat}
\bibfield{author}{\bibinfo{person}{Jeff Wu}.} \bibinfo{year}{2018}\natexlab{b}.
\newblock \bibinfo{booktitle}{\emph{{State of the art Hanabi bots + simulation
  framework in rust}}}.
\newblock
\newblock
\shownote{\url{https://github.com/WuTheFWasThat/hanabi.rs}. Accessed:
  2020-09-16.}


\bibitem[\protect\citeauthoryear{Yu, Velu, Vinitsky, Wang, Bayen, and Wu}{Yu
  et~al\mbox{.}}{2021}]%
        {mappo}
\bibfield{author}{\bibinfo{person}{Chao Yu}, \bibinfo{person}{Akash Velu},
  \bibinfo{person}{Eugene Vinitsky}, \bibinfo{person}{Yu Wang},
  \bibinfo{person}{Alexandre Bayen}, {and} \bibinfo{person}{Yi Wu}.}
  \bibinfo{year}{2021}\natexlab{}.
\newblock \showarticletitle{{The Surprising Effectiveness of PPO in
  Cooperative, Multi-Agent Games}}.
\newblock \bibinfo{journal}{\emph{arXiv preprint arXiv:2103.01955}}
  (\bibinfo{year}{2021}).
\newblock


\end{thebibliography}

\section*{Appendix}
\appendix

\section{Overview of mathematical frameworks}
\label{sec:mdps}

In Table \ref{tab:mdp-types} we provide an overview of possible decision processes to work with in reinforcement learning (RL). The simplified version of Hanabi we used in the paper falls in the category of a Dec-MDP. Notice that we have also made a distinction in collaborative games (where all agents receive identical rewards) and adversarial games (different rewards).



\begin{table}[H]
\caption{Mathematical frameworks for RL (with examples of games in parentheses).}
\label{tab:mdp-types}
\centering
\setlength\tabcolsep{4pt}%
\begin{tabular}{l|l|l}
    & fully observable  & partially observable  \\ \hline \rule{0pt}{2.4ex}%
single-agent         & \begin{tabular}[t]{@{}l@{}} MDP \\ (PacMan, Tetris) \end{tabular}     & \begin{tabular}[t]{@{}l@{}} POMDP \\ (Minesweeper) \end{tabular}          \\ \hline \rule{0pt}{2.4ex}%
\begin{tabular}[t]{@{}l@{}} multi-agent \\  identical rewards\end{tabular} & \begin{tabular}[t]{@{}l@{}} Dec-MDP or MMDP \\ (Pandemic) \end{tabular}                                & \begin{tabular}[t]{@{}l@{}} Dec-POMDP \\ (Hanabi) \end{tabular}                  \\  \hline \rule{0pt}{2.4ex}%
\begin{tabular}[t]{@{}l@{}} multi-agent \\ different rewards\end{tabular}  & \begin{tabular}[t]{@{}l@{}} SG\tablefootnote{Stochastic game. Despite the name, it does not necessarily have to be stochastic.} \\ (Chess, Go) \end{tabular} & \begin{tabular}[t]{@{}l@{}} POSG\tablefootnote{Partially observable stochastic game.} \\ (Poker) \end{tabular}
\end{tabular}
\end{table}




\section{Algorithm design and hyperparameters}
\label{sec:hyperparams}

All the options shown in Table~\ref{tab:hyperparams} were selected through a search of many short, preliminary experiments. These are the settings of the algorithms presented in the paper.

\begin{table}[H]
    \centering
    \caption{Settings of the three algorithms.}
    \label{tab:hyperparams}
    \setlength\tabcolsep{4pt}%
    \begin{tabular}{l|l|l|l}
        Algorithm & \textbf{SPG} & \textbf{VPG} & \textbf{PPO}\\ \hline \rule{0pt}{2.3ex}%
        \textbf{Network arch.}  & & & \\
        Hidden layers $\pi$ & [128,128,64] & [128,128,64] & [128,128,64] \\
        Hidden layers $V$ & - & [128,64,32] & [128,64,32] \\
        Activation func. $\pi$ & Tanh & Tanh & Tanh \\
        Activation func. $V$ & - & Tanh & Tanh \\
        \textbf{Representations} & & & \\
        State (input $\pi$ and $V$) & 136 & 136 & 136  \\
        Action (output $\pi$) & 11 & 11 & 11\\ 
        \textbf{Rewards} & & & \\
        Successful play & $+10$ & $+10$ & $+10$ \\
        Lost all lives & $-$score & $-$score & $-$score \\
        Illegal move & $-1$ & $-1$ & $-1$\\
        Lost one life & $-$0.1 & $-$0.1 & $-$0.1 \\
        Hint & $-$0.02 & $-$0.02 & $-$0.02 \\
        Play & $+$0.02 & $+$0.02 & $+$0.02\\
        Discard playable & $-$0.1 & $-$0.1 & $-$0.1 \\ 
        Discard useless & $+$0.1 & $+$0.1 & $+$0.1 \\
        Discard unique & $-$0.1 & $-$0.1 & $-$0.1 \\
        \textbf{Objective} & & & \\
        Advantage type & - & GAE & GAE \\
        GAE parameter $(\lambda)$ & - & 0.95 & 0.95 \\
        Clipping parameter $(\varepsilon)$ & - & - & 0.2 \\
        Entropy coefficient $(\beta)$ & $0.01$ & $0.01$ & $0.01$ \\
        \textbf{Optimizer} &  &  &  \\
        Learning rate $\pi$ & $3\cdot 10^{-4}$ & $3\cdot 10^{-4}$ & $3\cdot 10^{-4}$ \\
        Learning rate $V$ & - & $3\cdot 10^{-4}$ & $3\cdot 10^{-4}$ \\
        \textbf{Hyperparameters} & & &\\
        Batch size & $1000$ & $1000$ & $1000$ \\
        Renormalize $G_t$ or $A^{\pi_{\bm{\theta}}}$ & yes & yes & yes \\
        Discount factor $(\gamma)$ & $0.99$ & $0.99$ & $0.99$\\
        Update iterations $\pi$ & 1 & 1 & 5 \\
        Update iterations $V$ & - & 5 & 5 \\
    \end{tabular}
\end{table}

\section{extra figures}
\label{sec:extra-figs}


\begin{figure}[H]
    \centering
    \includegraphics[width=0.9\linewidth]{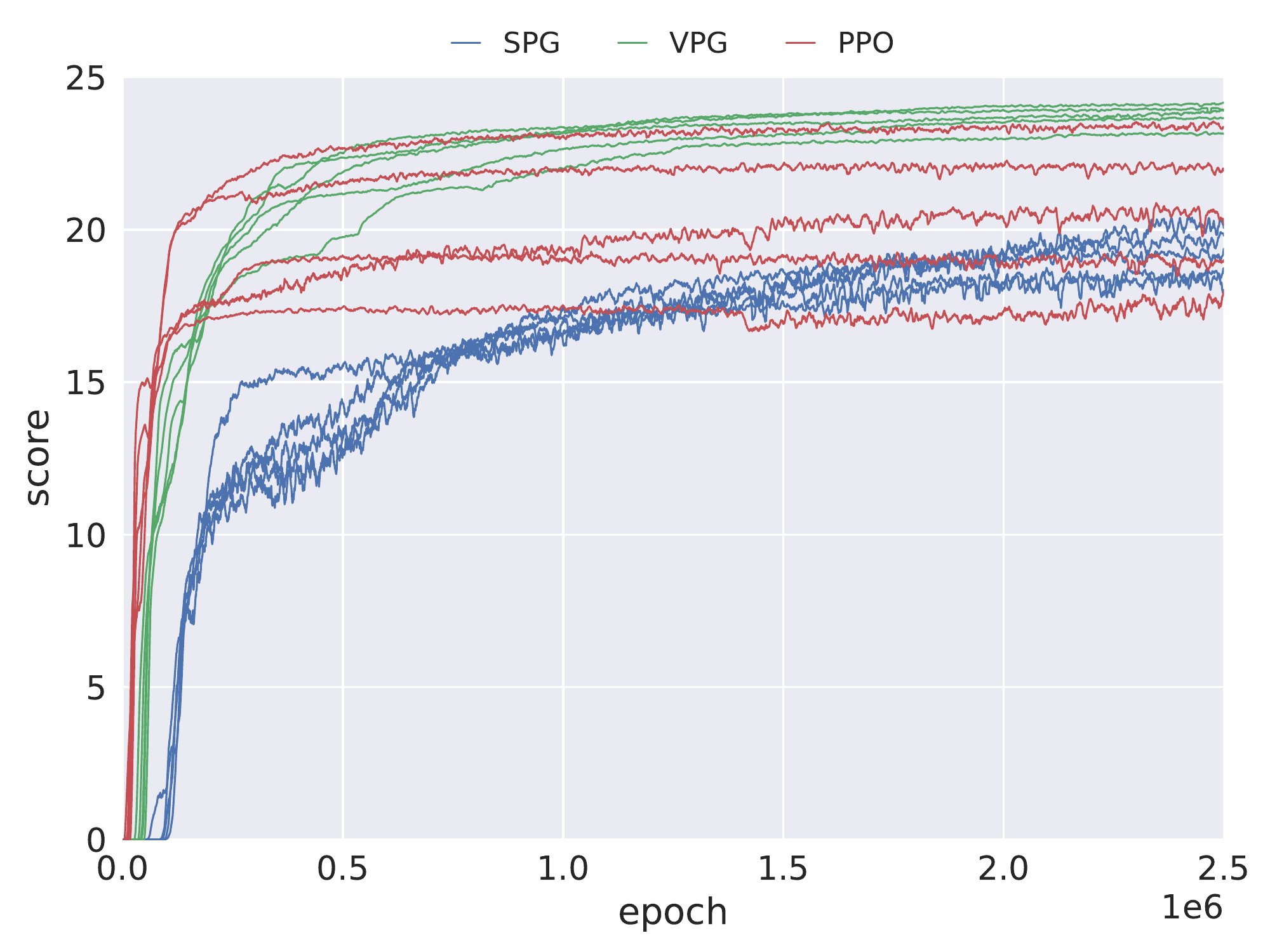}
    \caption{Complete learning curves shown separately for every random seed. PPO hits a plateau at varying levels.}
    \label{fig:separate-plot}
\end{figure}

\begin{figure}[H]
    \centering
    \includegraphics[width=0.9\linewidth]{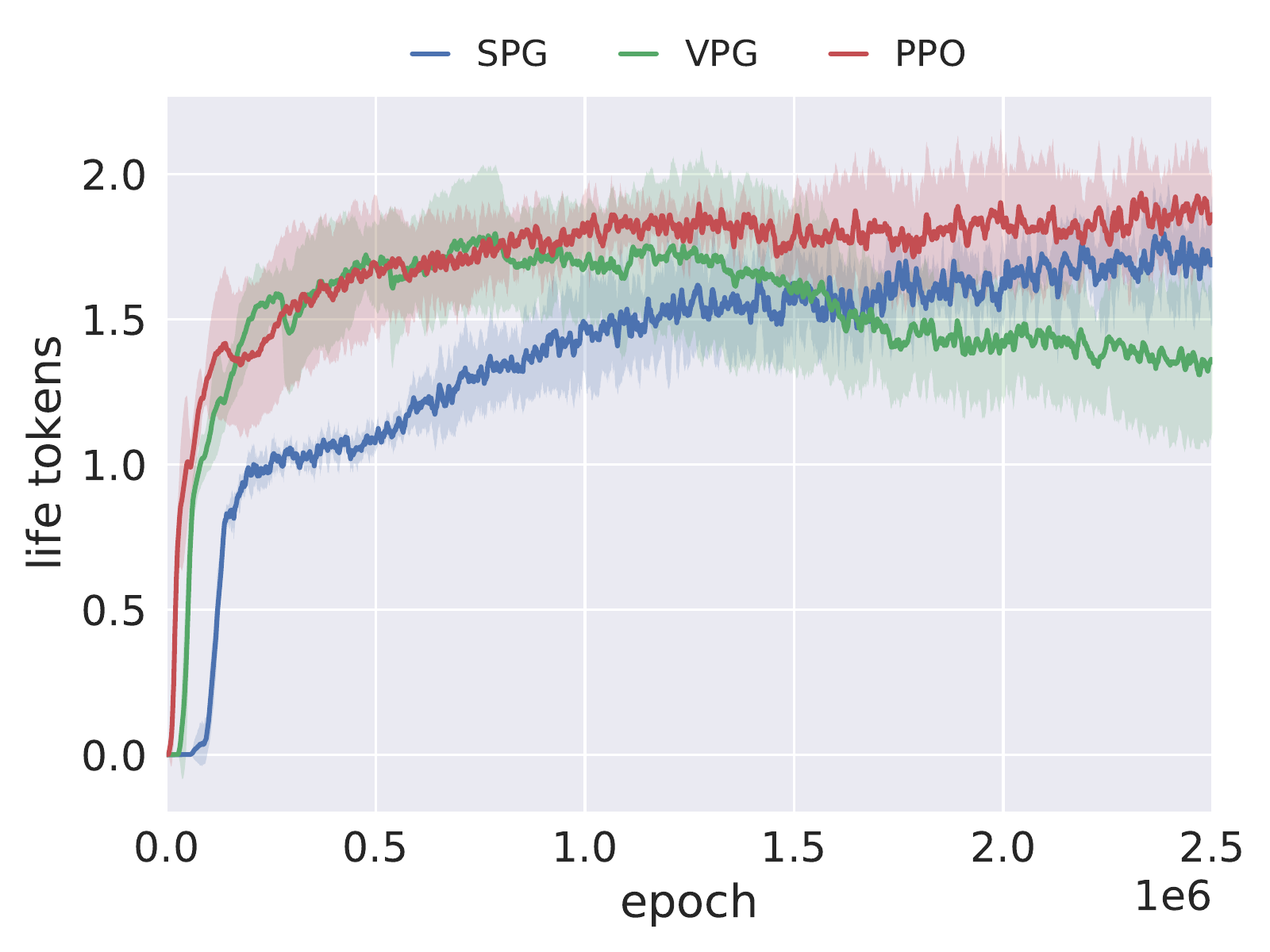}
    \caption{Average number of life tokens left at the end of an episode.}
    \label{fig:life-max}
\end{figure}

\begin{figure}[H]
    \centering
    \includegraphics[width=0.9\linewidth]{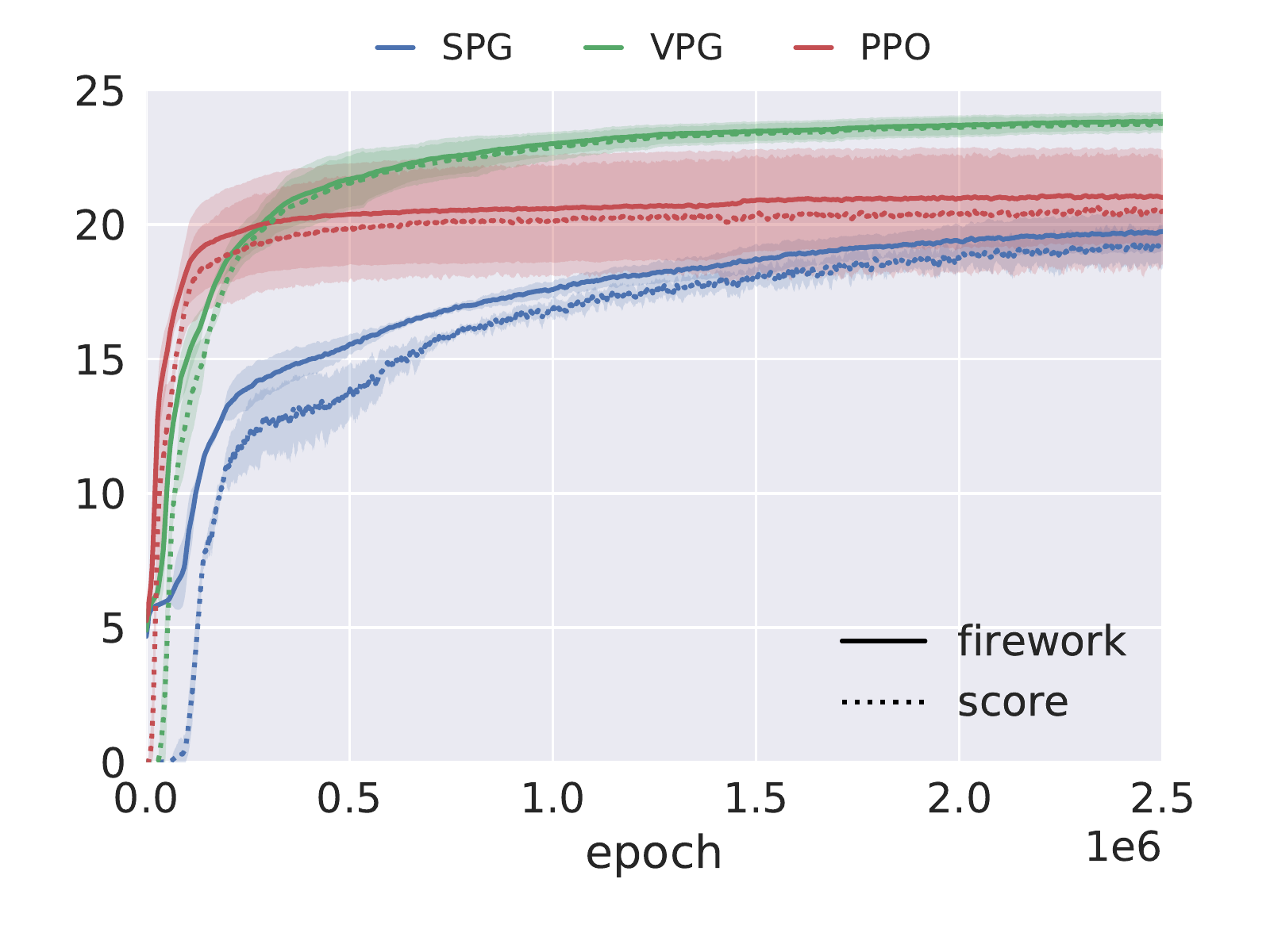}
    \caption{Development of fireworks and scores during training.}
    \label{fig:fireworks-max}
\end{figure}

\begin{figure}[H]
    \centering
    \includegraphics[width=0.8\linewidth]{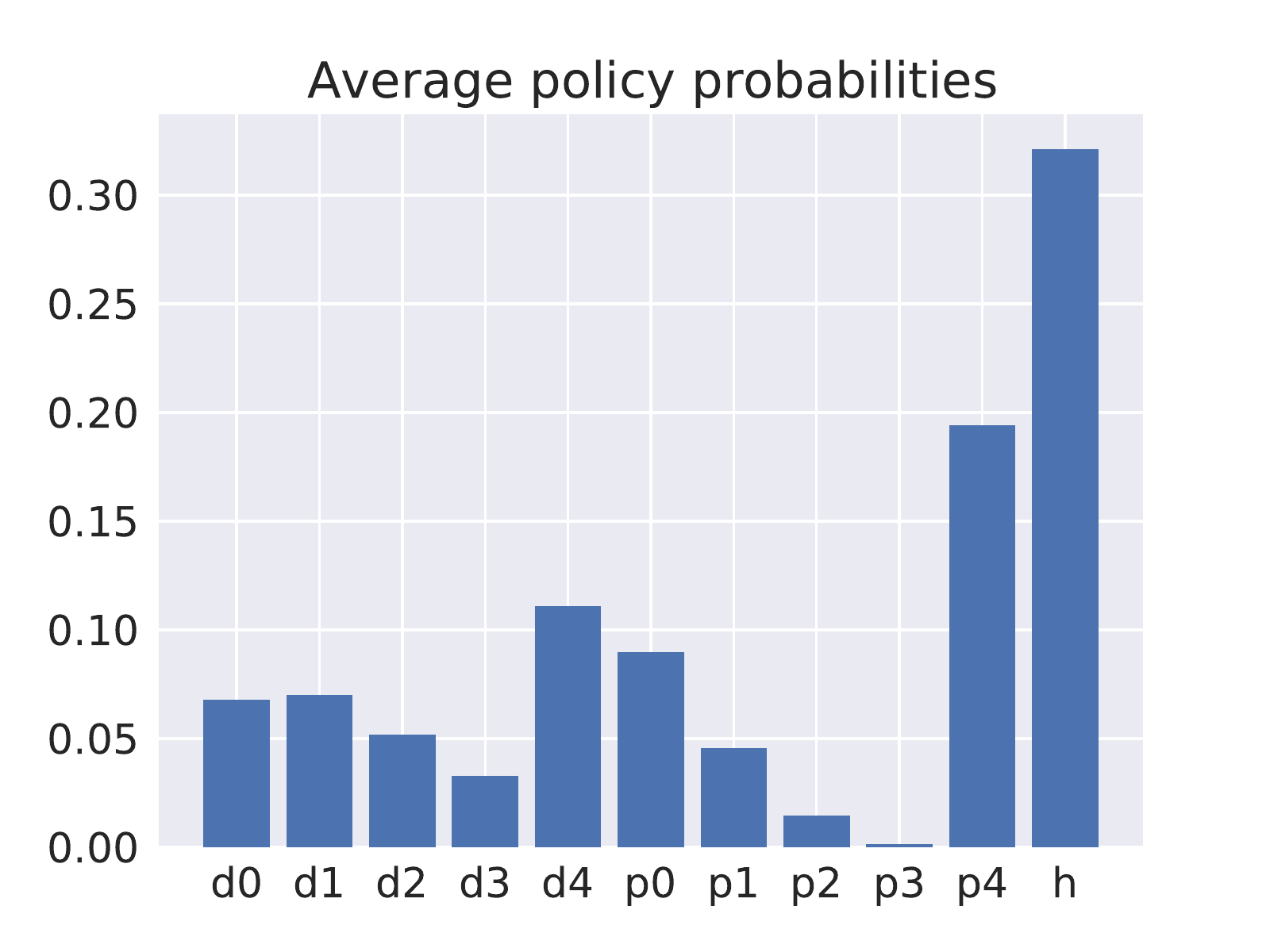}
    \caption{Average action selection probabilities of one SPG run during the last epoch.}
    \label{fig:probs-histo-spg}
\end{figure}

\begin{figure}[H]
    \centering
    \includegraphics[width=0.8\linewidth]{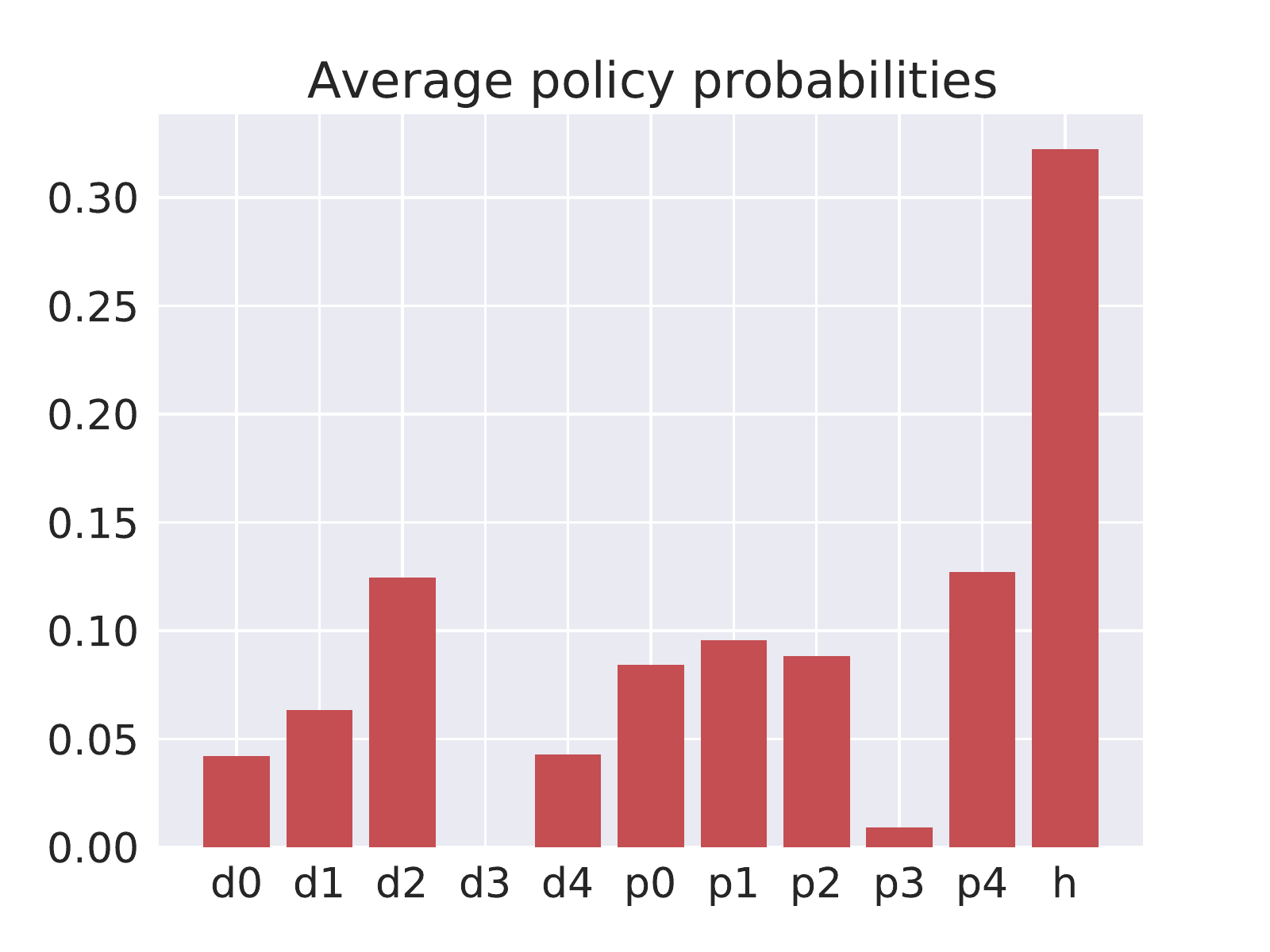}
    \caption{Average action selection probabilities of one PPO run during the last epoch.}
    \label{fig:probs-histo-ppo}
\end{figure}

\begin{figure}[H]
    \centering
    \includegraphics[width=0.85\linewidth]{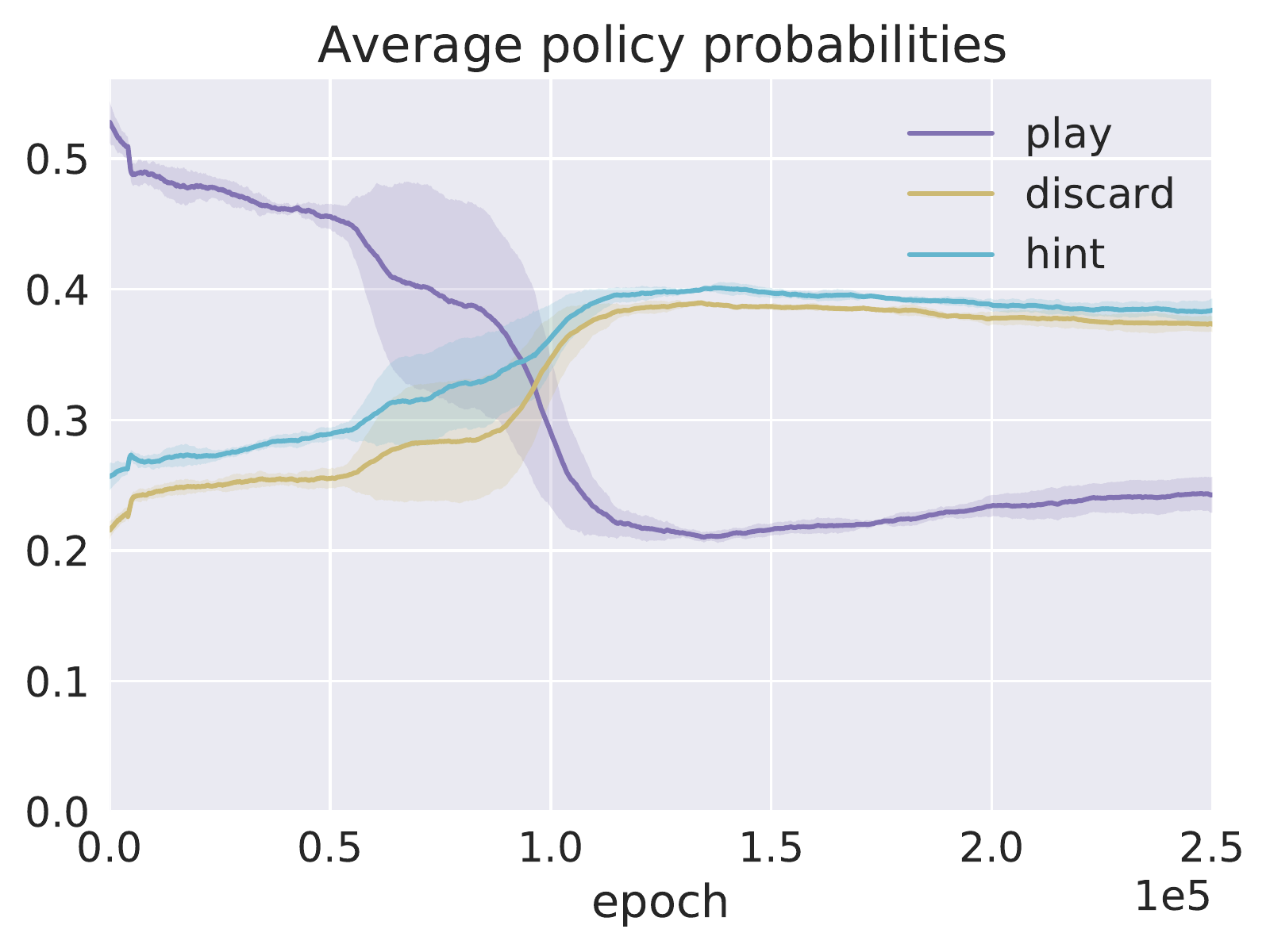}
    \caption{Average policy of our SPG agents during the first 10\% of training.}
    \label{fig:actions-spg}
\end{figure}

\begin{figure}[H]
    \centering
    \includegraphics[width=0.85\linewidth]{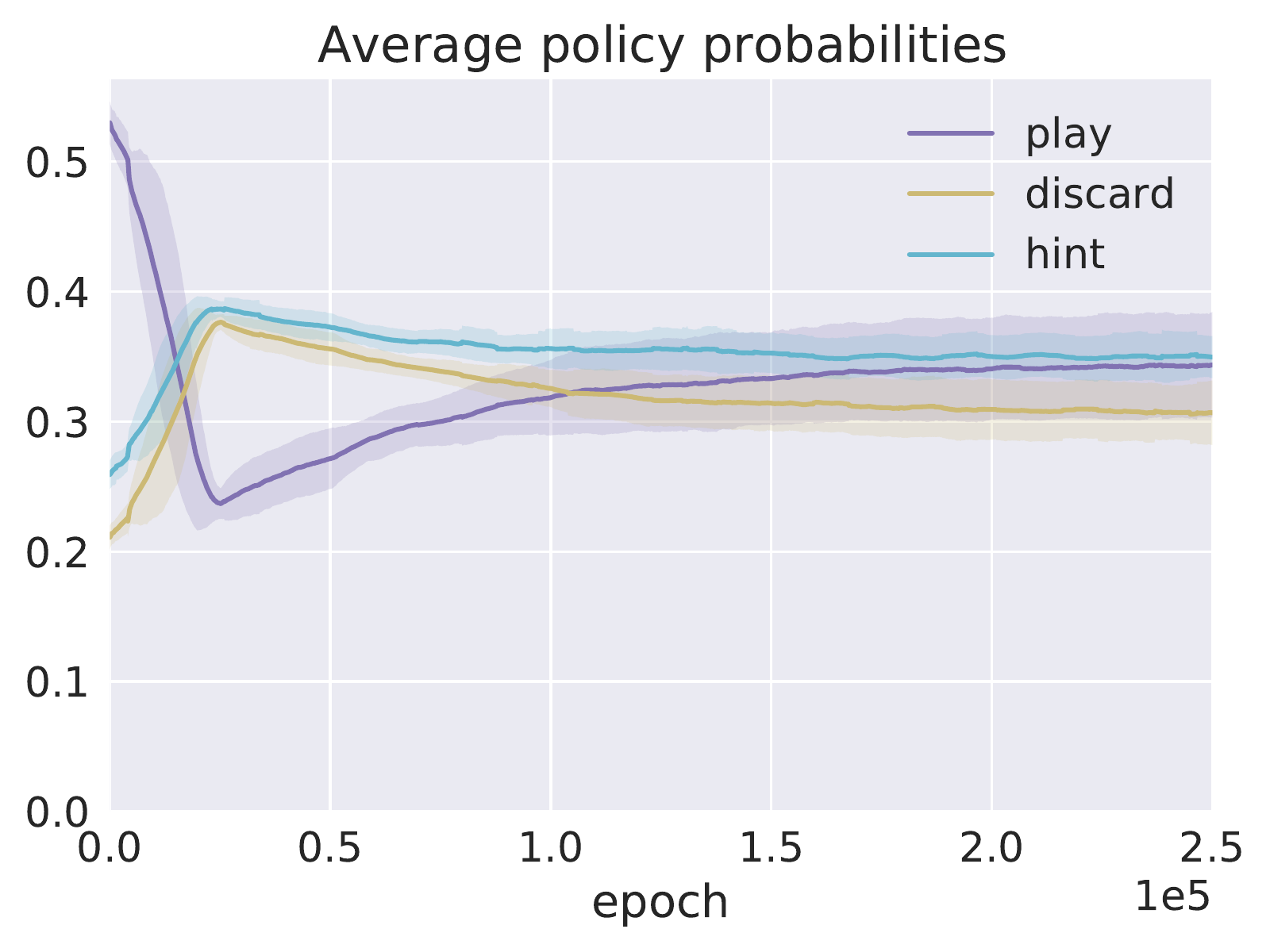}
    \caption{Average policy of our PPO agents during the first 10\% of training.}
    \label{fig:actions-ppo}
\end{figure}

\begin{figure}[H]
    \centering
    \includegraphics[width=0.85\linewidth]{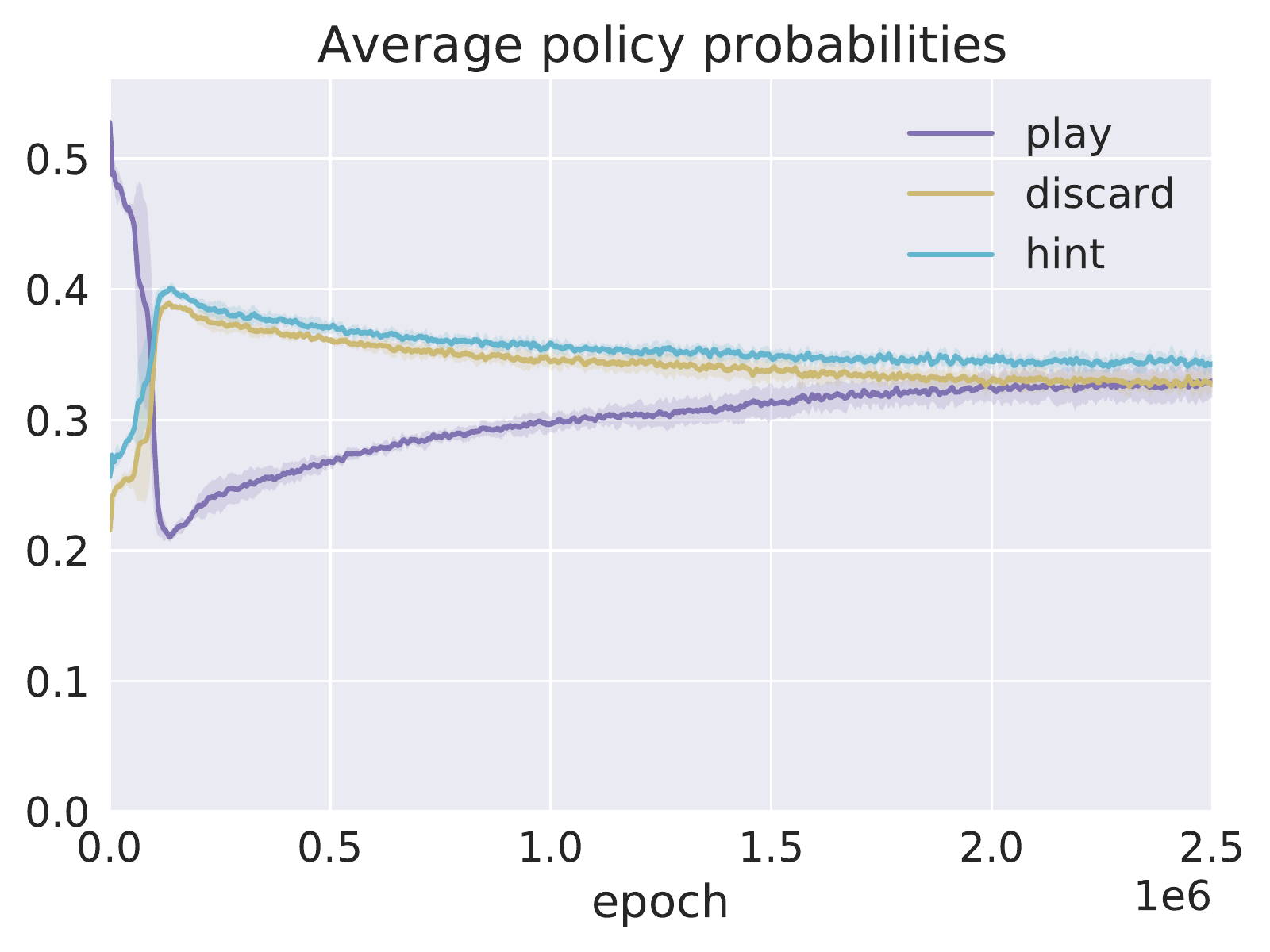}
    \caption{Average policy of our SPG agents during all of training.}
    \label{fig:actions-spg-max}
\end{figure}

\begin{figure}[H]
    \centering
    \includegraphics[width=0.85\linewidth]{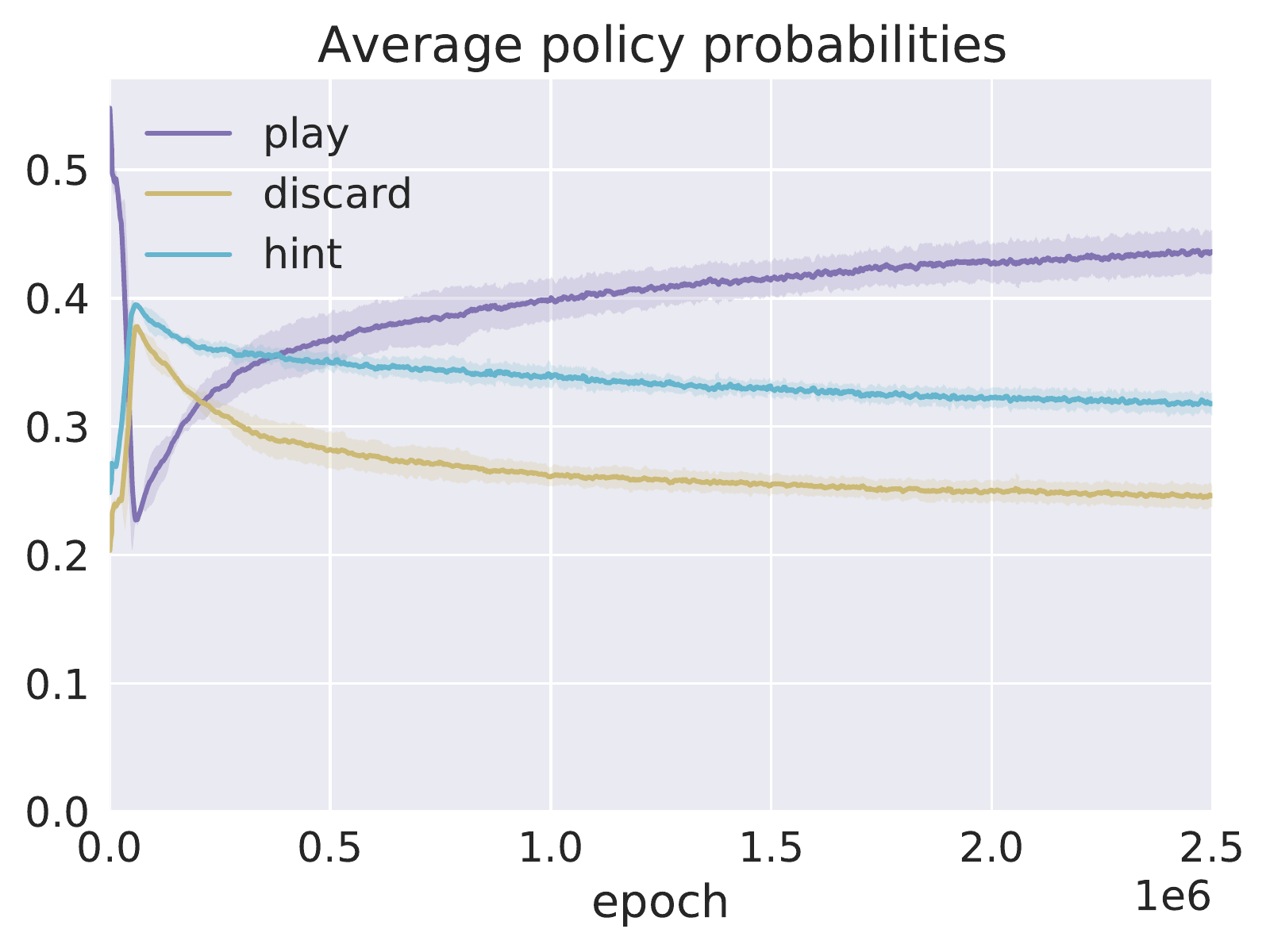}
    \caption{Average policy of our VPG agents during all of training.}
    \label{fig:actions-vpg-max}
\end{figure}

\begin{figure}[H]
    \centering
    \includegraphics[width=0.85\linewidth]{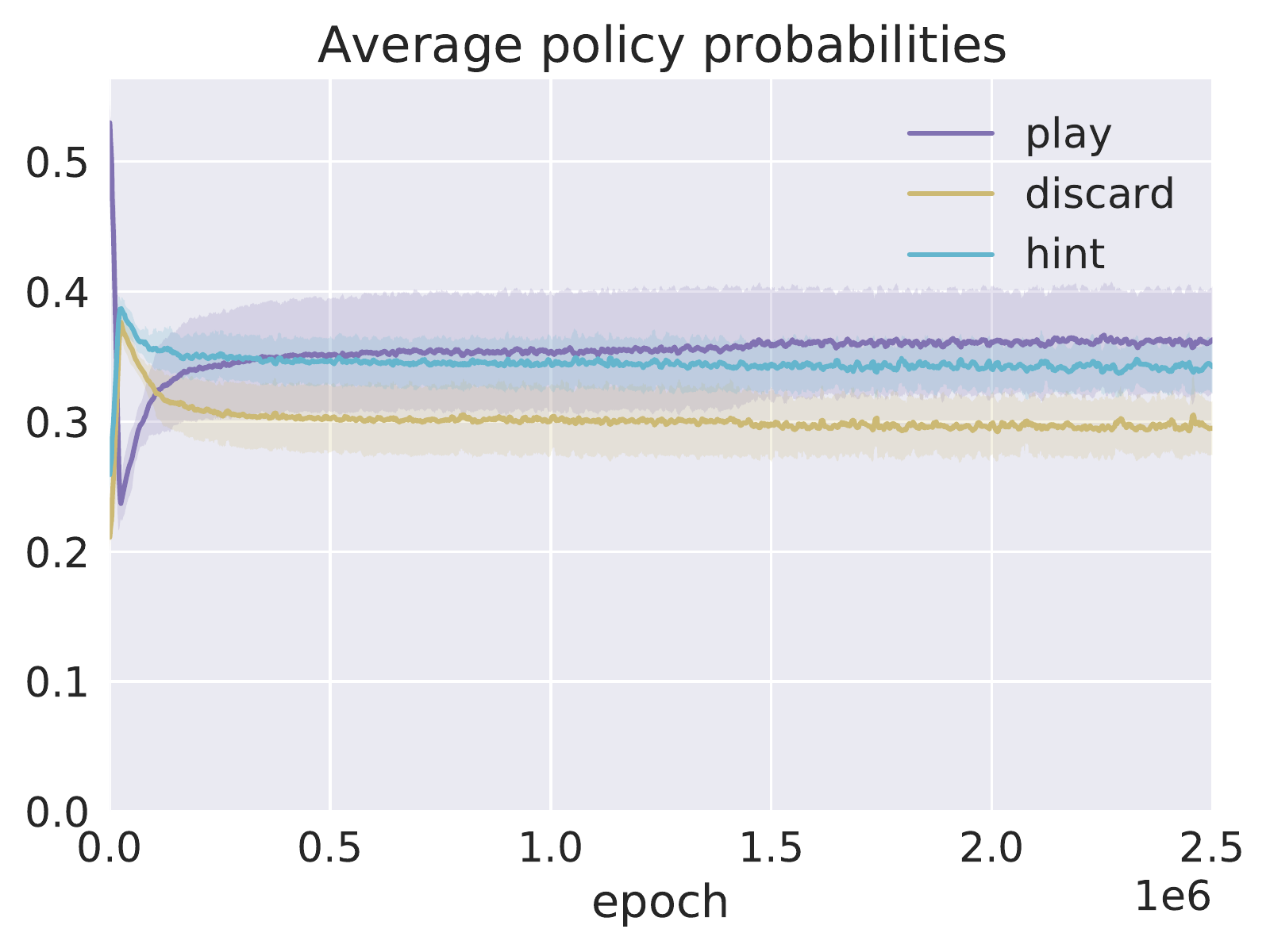}
    \caption{Average policy of our PPO agents during all of training.}
    \label{fig:actions-ppo-max}
\end{figure}

\begin{figure}[H]
    \centering
    \includegraphics[width=0.85\linewidth]{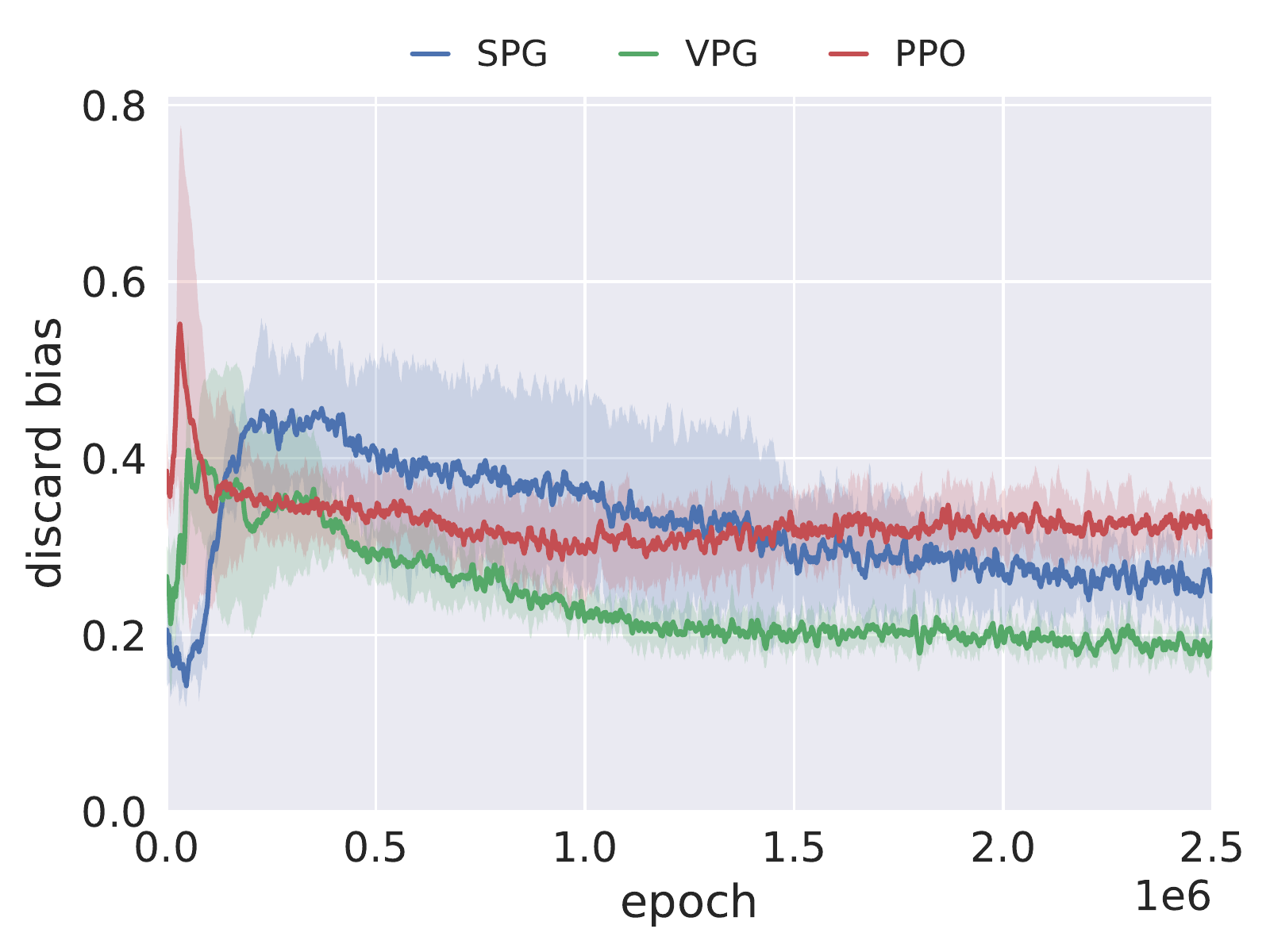}
    \caption{Positional bias of the discard actions during training.}
    \label{fig:bias-discard}
\end{figure}

\end{document}